\def\Comments{0}  
\newcommand{\citep}[1]{\cite{#1}}
\newcommand{\citet}[1]{\cite{#1}}
\def\ddefloop#1{\ifx\ddefloop#1\else\ddef{#1}\expandafter\ddefloop\fi}
\def\ddef#1{\expandafter\def\csname #1\endcsname{\ensuremath{\mathbb{#1}}}}
\def\ddef#1{\expandafter\def\csname c#1\endcsname{\ensuremath{\mathcal{#1}}}}
\def\ddef#1{\expandafter\def\csname b#1\endcsname{\ensuremath{\bm #1}}}
\def\ddef#1{\expandafter\def\csname h#1\endcsname{\ensuremath{\hat{#1}}}}
\def\ddef#1{\expandafter\def\csname t#1\endcsname{\ensuremath{\tilde{#1}}}}
\def\ddef#1{\expandafter\def\csname tb#1\endcsname{\ensuremath{\tilde{\bm #1}}}}
\newcommand{\prn}[1]{\left ( #1 \right )}
\newcommand{\ang}[1]{\left \langle #1 \right \rangle}
\newcommand{\sq}[1]{\left [ #1 \right ]}
\definecolor{Gred}{RGB}{219, 50, 54}
\definecolor{Ggreen}{RGB}{60, 186, 84}
\definecolor{Gblue}{RGB}{72, 133, 237}
\definecolor{Gyellow}{RGB}{247, 178, 16}
\definecolor{ToCgreen}{RGB}{0, 128, 0}
\definecolor{myGold}{RGB}{231,141,20}
\definecolor{myBlue}{rgb}{0.19,0.41,.65}
\definecolor{myPurple}{RGB}{175,0,124}
\providecommand{\Comments}{0}
\newcommand{\mytodo}[1]{\ifnum\Comments<2{#1}\fi}
\newcommand{\mytodoTwo}[1]{\ifnum\Comments<3{#1}\fi}
\newcommand{\mytodoThree}[1]{\ifnum\Comments<4{#1}\fi}
\newcommand{\todoinline}[1]{\ifnum\Comments<4\todo[inline,linecolor=Gred,backgroundcolor=Gred!25,bordercolor=Gred]{#1}\fi}
\newcommand{\tableoftodos}{\ifnum\Comments=1 \listoftodos[Comments/To Do's] \fi}
\newcommand{\eps}{\varepsilon}
\newcommand{\SVT}{\mathrm{SVT}}
\newcommand{\PREM}{\mathsf{PREM}}
\renewcommand{\phi}{\varphi}
\newcommand{\tOmega}{\tilde{\Omega}}
\DeclareMathOperator{\Lap}{Lap}
\DeclareMathOperator{\acti}{active}
\DeclareMathOperator{\equal}{\mbox{\sc Approx}}
\DeclareMathOperator{\plus}{+1}
\DeclareMathOperator{\minus}{--1}
\newcommand{\hbh}{\hat{\bh}}
\newcommand{\counter}{\mbox{counter}}
\newcommand{\OutsideIntervalMonitor}{\textsc{RangeMonitor}}
\newcommand{\OIM}{\mathsf{RM}}
\newcommand{\FindMarginExample}{\textsc{FindMarginExample}}
\newcommand{\Above}{\textsc{Above}}
\newcommand{\Below}{\textsc{Below}}
\newcommand{\Between}{\textsc{Inside}}
\newcommand{\ind}{\mathds{1}} 
\newcommand{\Factive}{\cF_{\acti}}
\newcommand{\Sactive}{\cS_{\acti}}
\newcommand{\Splus}{\cS^{+1}}
\newcommand{\Sminus}{\cS^{-1}}
\newcommand{\algcomment}[1]{\hfill\textcolor{black!50}{$\ldots$ #1}}
\renewcommand{\setminus}{\smallsetminus}
\newtheorem{theorem}{Theorem}[section]
\newtheorem{lemma}[theorem]{Lemma}
\newtheorem{proposition}[theorem]{Proposition}
\newtheorem{claim}[theorem]{Claim}
\theoremstyle{definition}
\newtheorem{definition}[theorem]{Definition}
\newenvironment{proofof}[1]%
{%
\par\noindent{\bfseries\upshape \proofname\xspace of #1.\ }%
}%
{\qed}
\title{PREM: Privately Answering Statistical Queries with Relative Error}
\author{Badih Ghazi\\Google Research\\{\small \texttt{badihghazi@gmail.com}}
\and
Crist\'obal Guzm\'an\\Google Research and \\Pontificia Universidad Cat\'olica de Chile\\{\small \texttt{crguzman@google.com}}
\and
Pritish Kamath\\Google Research\\{\small \texttt{pritish@alum.mit.edu}}
\and
Alexander Knop\\Google Research\\{\small \texttt{alexanderknop@google.com}}
\and
Ravi Kumar\\Google Research\\{\small \texttt{ravi.k53@gmail.com}}
\and
Pasin Manurangsi\\Google Research\\{\small \texttt{pasin@google.com}}
\and
Sushant Sachdeva\\Google Research and \\University of Toronto\\{\small \texttt{susachdeva@google.com}}
}
\date{\today}
\begin{document}

\maketitle

\begin{abstract}
We introduce $\mathsf{PREM}$ (Private Relative Error Multiplicative weight update), a new framework for generating synthetic data that achieves a {\em relative} error guarantee for statistical queries under $(\varepsilon, \delta)$ differential privacy (DP).
Namely, for a domain ${\cal X}$, a family ${\cal F}$ of queries $f : {\cal X} \to \{0, 1\}$, and $\zeta > 0$, our framework yields a mechanism that on input dataset $D \in {\cal X}^n$ outputs a synthetic dataset $\widehat{D} \in {\cal X}^n$ such that all statistical queries in ${\cal F}$ on $D$, namely $\sum_{x \in D} f(x)$ for $f \in {\cal F}$, are within a $1 \pm \zeta$ {\em multiplicative} factor of the corresponding value on $\widehat{D}$ up to an {\em additive error} that is polynomial in $\log |{\cal F}|$, $\log |{\cal X}|$, $\log n$, $\log(1/\delta)$, $1/\varepsilon$, and $1/\zeta$.
In contrast, any $(\varepsilon, \delta)$-DP mechanism is known to require worst-case additive error that is polynomial in at least one of $n, |{\cal F}|$, or $|{\cal X}|$.
We complement our algorithm with nearly matching lower bounds.
\end{abstract}

\begin{small}
  \noindent{\bf Keywords:}  Differential Privacy, Synthetic Data Generation, Query Answering, Relative Error
\end{small}



\section{Introduction}\label{sec:intro}
Differential Privacy (DP)~\citep{Dwork:2006} has become the de facto standard for privacy preserving data analysis. An important use case is that of releasing statistics about sub-populations within a dataset while protecting privacy of individual records within it. DP has seen several practical deployments in the recent years, e.g., the US census being a significant one~\citep{abowd22topdown}.

Formally, consider a dataset $D = (x_1, \ldots, x_n) \in \cX^n$ of \emph{records} $x_i \in \cX$ for some finite \emph{domain} $\cX$, and consider a \emph{query} of interest that is a function $f : \cX \to \{0, 1\}$. We are interested in privately releasing the evaluation of the statistical query (a.k.a. linear query) $f \in \cF$ on dataset $D$, namely $f(D) := \sum_{x \in D} f(x)$, where $\cF$ is a family of queries. To motivate this definition, consider a dataset where each record $x$ contains several attributes about a unique individual in a population, such as their age, gender, race, zip-code, education level, income, etc. A potential query function $f$ of interest could be such that $f(x) = 1$ if the record $x$ corresponds to a male person of age in 30-35, with income above \$50,000 in all zip-codes within a certain county, and $f(x)=0$ otherwise.

Mechanisms that satisfy DP provide randomized estimates $(\he_f)_{f \in \cF}$ of these statistical query evaluations $(f(D))_{f \in \cF}$. The quality of these estimates are measured in terms of their accuracy, with a typical approach being to measure the {\em additive error}, namely, we would say that the estimates $\hat{\be} \in \R^{\cF}$ is $(\alpha, \cF)$-accurate with respect to dataset $D$ if $|\hat e_f - f(D)| \le \alpha$ for all $f \in \cF$. Prior work has introduced numerous mechanisms for releasing statistical query evaluations, which we summarize in \Cref{tab:prior-work-our-results}; while some of these mechanisms only release the estimates $\hat{\be} \in \R^{\cF}$, others generate a synthetic dataset $\widehat{D}$ from which the estimates $\he_f := f(\widehat{D})$ can be derived for any $f \in \cF$.

These results are stated informally with $\tO(\cdot)$ notation that hides lower order polylogarithmic terms\footnote{Formally, $\tO(f)$ means a quantity that is $O(f \log^c f)$ for some constant $c$, where the asymptotics are in terms of $n, |\cX|, |\cF|, 1/\zeta, 1/\eps, 1/\delta$.} 
wherever applicable, and we also consider a small, but constant, failure probability, e.g., $0.01$. An important point to note about all the additive error bounds are that they incur an error that is polynomial in either $n$, $|\cF|$, or $|\cX|$. Moreover, such dependencies are known to be required for {\em additive-only} error.

{
\begin{table}[t]\renewcommand{\arraystretch}{1.8}
    \centering\fontsize{9pt}{10}\selectfont
    \begin{tabular}{|c|c|c|c|c|}
    \hline
    {\bf Reference} & {\bf Error type} & {\bf Error bound} & {\bf DP} & {\bf Release} \\
    \hline
     \cite{steinke16between} & \multirow{5}{*}{Additive} & $O\prn{\frac{|\cF|}{\eps}}$ & $\eps$-DP & \multirow{2}{*}{Estimates}\\
     \cline{1-1}\cline{3-4}
    \raisebox{-5pt}{\footnotesize \shortstack{\cite{DK22}\\ \cite{GKM21}}} & & $O\prn{\frac{\sqrt{|\cF| \log \frac 1\delta}}{\eps}}$ & $(\eps, \delta)$-DP & \\
    \cline{1-1}\cline{3-5}
    {\footnotesize \cite{vadhan17complexity}} &  & $O\prn{\frac{\sqrt{|\cX| \log |\cF|}}{\eps}}$ & $\eps$-DP & \multirow{3}{*}{Syn. Data} \\
    \cline{1-1}\cline{3-4}
    \raisebox{-7pt}{\footnotesize \shortstack{\cite{BlumLR13}\\\cite{HardtR10}\\\cite{HardtLM12}}} & & $O\prn{n^{\frac23} \prn{\frac{\log |\cF| \log|\cX|}{\eps}}^{\frac13}}$ & $\eps$-DP & \\
    \cline{1-1}\cline{3-4}
    \raisebox{-5pt}{\footnotesize \shortstack{\cite{HardtR10}\\\cite{HardtLM12}}} & & $O\prn{n^{\frac12}\prn{\frac{\sqrt{\log |\cX|} \log |\cF| \log \frac1\delta}{\eps}}^{\frac12}}$ & $(\eps, \delta)$-DP & \\
    \hline
    \hline
    \Cref{thm:efficient_pure_DP_UP} (This work) & \multirow{2}{*}{Relative} & $\tO\prn{\sqrt{\frac{n}{\eps \zeta^2}\log |{\cF}|\log|{\cX}|}+\frac{1}{\eps}}$ & $\eps$-DP & \multirow{2}{*}{Syn. Data} \\
    \Cref{thm:main} (This work) & & $\tO\prn{\frac{1}{\zeta\eps} \prn{\log n \log \frac1\delta}^{\frac32} \sqrt{\log |\cX|}\log|\cF|}$ & $(\eps, \delta)$-DP & \\
    \hline
    \end{tabular}
    \caption{Additive error bounds in prior work \& relative error bounds in this work.}
    \label{tab:prior-work-our-results}
\end{table}
}

The additive nature of the error bound can however be limiting in some settings. Consider a statistical query $f(D)$ that counts the number of people matching a certain rare combination of attributes. In such cases, the additive error can completely overwhelm the true value of the statistical query $f(D)$. On the other hand, for a statistical query $f(D)$ that counts the number of records matching a common combination of attributes, the additive error will be much smaller compared to $f(D)$.%
In such cases, it can be more meaningful to have a notion of {\em relative} error, wherein the scale of error depends on the magnitude of the statistical query, thereby affording a small error when $f(D)$ is small.

\paragraph{Our Contributions.}
We consider the problem of answering statistical queries with {\em relative} error. Namely, we say that the estimates $\hat{\be} := (\he_f)_{f\in \cF}$ are \emph{$(\zeta,\alpha, \cF)$-accurate} with respect to dataset $D$ if for all $f \in \cF$ it holds that $(1 - \zeta) \cdot f(D) - \alpha \leq \he_f \leq (1 + \zeta) \cdot f(D) + \alpha$.
A (randomized) mechanism $\cA:\cX^* \mapsto\R_{\ge 0}^\cF$ is \emph{$(\zeta, \alpha, \beta, \cF)$-accurate} if $\cA(D)$ is $(\zeta, \alpha, \cF)$-accurate with respect to $D$ with probability at least $1 - \beta$.
As before, a stronger approach is to generate a synthetic dataset $\widehat{D}$ such that the derived estimates $\he_f := f(\widehat{D})$ are $(\zeta, \alpha, \beta, \cF)$-accurate with respect to $D$.

In \Cref{sec:upper-bounds}, we introduce a new framework 
of $\mathsf{PREM}$ ({\bf P}rivate {\bf R}elative {\bf E}rror {\bf M}ultiplicative weight update) that, under $(\eps,\delta)$-DP, achieves $\alpha = \tO\Big(\frac{1}{\zeta\eps} \prn{\log n \log \frac1\delta}^{\frac32} \sqrt{\log |\cX|}\log|\cF| \Big)$. Notice that this error is only polylogarithmic in $n, \frac{1}{\delta}, |\cX|, |\cF|$, which is in stark contrast to the aforementioned additive-only results, which \emph{must} be polynomial in one of the parameters.

Under $\eps$-DP, our PREM framework achieves  
$\alpha = \tO\big(\sqrt{\frac{n}{\eps \zeta^2}\log |{\cF}|\log|{\cX}|}+\frac{1}{\eps}\big)$. While this has polynomial dependency of $\sqrt{n}$ on the dataset size, it is still an improvement over the $n^{2/3}$ dependency of the best known mechanisms for additive error~\citep{BlumLR13,HardtLM12}.

In \Cref{sec:lower-bounds}, we prove lower bounds on the relative accuracy of any mechanism that releases estimates for statistical queries (not just synthetic data generation) under both approximate- and pure-DP. For $(\eps, \delta)$-DP, we show that for sufficiently large family of queries $\cF$, our algorithm is nearly optimal (up to polylog factors on $n,\log|\cF|,\log|\cX|,\frac{1}{\zeta},\frac{1}{\eps},\frac{1}{\delta}$). For $\eps$-DP, our lower bound is not as sharp compared to our upper bounds, which is a gap that is present even in the purely-additive case \citep{DPorg-open-problem-optimal-query-release}. We suspect that in order to tighten the pure-DP gaps for relative accuracy it may be necessary to first resolve the gaps on the purely-additive setting.

Finally, in \Cref{app:real-v-bin} we extend our relative-error upper bounds to hold for queries taking values in the interval $[0, 1]$. This approach is based on a general reduction that splits each real-valued statistical query into Boolean-valued threshold queries with exponentially-decreasing thresholds. This reduction induces a minimal overhead in the additive error bound. In fact, the additive error remains asymptotically the same as long as $\frac{\log n}{\zeta} \leq |\cF|^{O(1)}$. 

\paragraph{Related Work.}
Query answering and synthetic data generation are central problems in private data analysis, and there is a vast literature studying them.

The first concerns on answering counting queries over sensitive databases were the driving force behind the notion of DP~\citep{Dinur:2003,Dwork:2006}. Ever since, this problem has been a central focus in this area (see, e.g.,~\cite{Hay:2010,Li:2010matrix_mech,Hardt:2010,Gupta:2011,Nikolov:2013,Dwork:2015}). It was later observed that generating private synthetic data offers the possibility for data analysts to access datasets in `raw' form, and where any posterior analysis is privacy protected, by postprocessing properties of DP. A meaningful way to assess the quality of synthetic data is by its worst-case additive error over a set of prescribed (or adaptively generated) queries \citep{Barak:2007,BlumLR13,HardtR10,HardtLM12,GuptaRU12,Hsu:2013,Gaboardi:2014}. Both in the query answering and synthetic data settings, it is known that any DP algorithm must incur error that is polynomial in at least one of $n,|\cF|,|\cX|$ \citep{HardtPhDThesis:2011,Bun:2014}. 

Relative accuracy guarantees have become a recent focus in DP, particularly for analytics settings, where trends can be better traced by substantial changes, naturally expressed in relative terms \citep{Cormode:2012,Qardaji:2013,Zhang:2016PrivTree,EpastoMMMVZ23,Ghazi:2023}. 
However, existing works have focused on specific problems, and to our knowledge no general framework has been established in this context. We point out that among these specific settings of interest, the case of spectral and cut approximations on graphs has been studied under relative approximation in multiple works~\cite[e.g.,][]{Blocki:2012,Arora:2019}. Besides the setting of statistical queries, relative error is also the standard notion in approximation algorithms \citep[see, e.g.,][]{WS11}, which have been studied with DP~\citep{GuptaLMRT10}.
\section{Preliminaries}\label{sec:prelims}

We follow the common convention of representing a dataset $D = \{x_1, \ldots, x_n\}$, whose elements are from the domain $\cX$, by the corresponding histogram vector $\bh^D \in \Z_{\ge 0}^{\cX}$, 
where $\bh^D_x$ is $|\{ i \in [n] : x_i = x \}|$, as well as interpreting $f: \cX \to \{0, 1\}$ as a vector in $\{0,1\}^{\cX}$. 
It is immediate to see that
$
f(D) = \ang{\bh^D, f} = \sum_{x \in \cX} h^D_x \cdot f(x)
$.
For simplicity, we use $\bh$ to denote the dataset itself and skip the superscript $D$, and we use $f(\bh)$ to denote $\ang{\bh, f}$. For any $S \subseteq \cX$, we use $\ind_S : \cX \to \{0, 1\}$ to denote the function $\ind_S(x) = \mathds{1}\{x \in S\}$. And so, $\ind_S(\bh) = \sum_{x \in S} h_x$. And we use $\bh|_S$ to denote the histogram $\bh' \in \Z_{\ge 0}^{\cX}$ with $h_x' = h_x \cdot \ind_S(x)$.

We consider mechanisms $\cB:\Z_{\ge 0}^{\cX}\mapsto \R_{\ge 0}^{\cX}$ that generate synthetic histograms $\hat{\bh} = \cB(\bh)$, from which the estimates $(\hat{e}_f := f(\hat{\bh}))_{f \in \cF}$ can be derived. We refer to such mechanisms as 
{\em synthetic data generators}.
Note that, we allow the synthetic histogram to be real-valued (instead of integer-valued). 
 In what follows, we will not distinguish between a synthetic histogram and a synthetic dataset, for the following reason.
 If a (nonnegative real-valued) histogram is relatively accurate with respect to a query family, sampling a dataset i.i.d.~from the probability distribution induced by the histogram will also be relatively accurate, with slightly worse parameters. We omit these details, but they are implicit in the proof of \Cref{prop:existence_approx_histogram}.

\paragraph{Differential Privacy.}
Two data histograms $\bh, \bh' \in \Z_{\ge 0}^{\cX}$ are said to be {\em adjacent} if
$\|\bh\|_1 = \|\bh'\|_1$ and $\|\bh - \bh'\|_1 = 2$; 
this is equivalent to replacing some record in the dataset corresponding to $\bh$ by a different record in the dataset corresponding to $\bh'$. We view randomized mechanisms $\cM$ as mapping datasets $\bh$ to a random variable $\cM(\bh)$ over some output space $\cO$.

\begin{definition}[{\boldmath $(\eps, \delta)$-DP}]
    A randomized mechanism $\cM$ with output space $\cO$ satisfies \emph{$(\eps, \delta)$-DP} (referred to as \emph{approximate-DP}) if, for all (measurable) events $E\subseteq \cO$, and for all adjacent $\bh, \bh'$ it holds that
    $
    \Pr[\cM(\bh) \in E] \le e^\eps \cdot \Pr[\cM(\bh') \in E] + \delta
    $.
    The special case of $\delta = 0$ is denoted as $\eps$-DP (referred to as \emph{pure-DP}).
\end{definition}

\begin{algorithm}[t]
\small
\caption{\textsc{$\OutsideIntervalMonitor^{\mathrm{apx}}_{\bh, a, \cY}$ (Approximate-DP version)}}
\label{alg:outside-thresholds}

\textbf{Initialization:}
{\footnotesize $\bullet$ } $\bh \in \Z_{\ge 0}^{\cX}$ : private histogram \qquad \qquad 
{\footnotesize $\bullet$ } $a > 0$ : noise parameter\\
\phantom{\textbf{Initialization:}\,}
{\footnotesize $\bullet$ } $\cS_{\acti} \gets \cY$ : initial active set (for $\cY \subseteq \cX$)\\[-2mm]

\textbf{On input} $(f: \cX \to \{0, 1\}, \tau_\ell, \tau_u \in \R_{\ge 0})$ {\bf :}\\[-2mm]

$\hf \gets f(\bh|_{\cS_{\acti}}) + \Lap(1/a)$\;

\If{{\em $\tau_\ell < \hf < \tau_u$}} {
    \Return{\Between}\;
}
\Else{
    ${\cal S}_{\acti} \gets {\cal S}_{\acti} \smallsetminus f^{-1}(1)$\;\algcomment{Note: $\Sactive$ is a persistent state across inputs.}\\
    \If{$\hf \ge \tau_u$} {
        \Return{\Above}
    } \Else {
        \Return{\Below}
    }
}
\end{algorithm}

We use the compositional properties of DP and the well-studied Laplace mechanism; we include the details in \Cref{sec:basic_DP_properties} for completeness.

\paragraph{$\OutsideIntervalMonitor$.}
We use the $\OutsideIntervalMonitor$, an iterative mechanism that determines if a given statistical query evaluates to a value within a specified interval or falls above or below it. The key property is that the privacy cost of the entire mechanism on a sequence of queries only degrades with the number of times the value is {\em not} in the interval. We use this technique in two flavors: one that satisfies approximate-DP (\Cref{alg:outside-thresholds}) and other that satisfies pure-DP (\Cref{alg:outside-thresholds-pure-dp}). The analysis of the pure-DP version follows from the standard analysis of the so-called {\em sparse vector technique (SVT)} (\Cref{sec:OIM_pure_DP}). The approximate-DP version is analyzed using the target charging technique~\citep{Cohen:2023} and individual privacy accounting (\Cref{app:TCT}).

\begin{proposition}[{\boldmath $\OutsideIntervalMonitor$ (Approx-DP) guarantees}]\label{prop:oim-approx-dp} 
For any $0< a\leq 1$, $\delta>0$ and integer $R > 0$, $\OutsideIntervalMonitor^{\mathrm{apx}}_{\bh, a, \cY}$ (\Cref{alg:outside-thresholds}), after $R$ rounds of queries, satisfies $(\eps, \delta)$-DP for $\eps = O\left(a \log \frac{1}{\delta}\right)$.  Let $\beta > 0$, $C := \frac1a \log \frac{R}{\beta}$, and $\Sactive$ be the state of the set before the query is made.  Then, with probability at least $1-\beta$, on any query $(f, \tau_\ell, \tau_u),$
\begin{itemize}[topsep=3pt,itemsep=-3pt]
    \item If $\Between$ is returned, then $\tau_\ell - C \le f(\bh|_{\cS_{\acti}}) \le \tau_u + C$.
    \item If $\Above$ is returned, then $f(\bh|_{\cS_{\acti}}) \ge \tau_u - C$.
    \item If $\Below$ is returned, then $f(\bh|_{\cS_{\acti}}) \le \tau_\ell + C$.
\end{itemize}
\end{proposition}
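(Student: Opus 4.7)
The proof splits into two independent parts: an accuracy guarantee, which follows from standard Laplace concentration, and the privacy guarantee, which is the main technical content and relies on the target charging framework of \citet{Cohen:2023} combined with individual privacy accounting.

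For the accuracy claim, my first step is to observe that across the $R$ rounds the algorithm draws $R$ independent $\Lap(1/a)$ samples, so I would apply the standard tail bound $\Pr[|\Lap(1/a)| > t] \le \exp(-at)$ with $t = C = \frac{1}{a}\log(R/\beta)$. A union bound over the $R$ draws then gives $|\Lap_i| \le C$ for all $i$ simultaneously with probability at least $1 - \beta$. Conditioning on this event, the three claims follow by case analysis: if Between is returned in some round then $\tau_\ell < \hat f < \tau_u$, so subtracting the bounded noise term yields $\tau_\ell - C \le f(\bh|_{\Sactive}) \le \tau_u + C$, and the Above/Below cases follow analogously from $\hat f \ge \tau_u$ and $\hat f \le \tau_\ell$ respectively.

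For the privacy claim, my plan is to bound the privacy loss one individual at a time and then aggregate. Fix a pair of adjacent histograms $\bh, \bh'$ differing in a single record $x$. For any round in which either $x \notin \Sactive$ or $f(x) = 0$, the swap does not change $f(\bh|_{\Sactive})$ and thus does not change the output distribution, so such rounds contribute zero privacy loss. Consequently the analysis reduces to ``hot'' rounds where $x \in \Sactive$ and $x \in f^{-1}(1)$. The key structural fact I would exploit is that the very first hot round whose output is Above or Below removes $x$ from $\Sactive$; no round thereafter can contribute further loss against $x$.

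The main obstacle, then, is to bound the loss accumulated over the hot rounds whose output is Between, which a priori could be numerous, and for which naive per-round composition would yield only an $a\sqrt{R\log(1/\delta)}$ bound. Here I would invoke the target charging technique: instead of composing Laplace losses round-by-round, we charge privacy loss against specific triggering events whose count per individual is controlled by the shrinking of $\Sactive$. Concretely, the Laplace-noised comparison $\hat f$ against the fixed interval $(\tau_\ell, \tau_u)$ fits the TCT template, and the removal rule guarantees that each individual's target budget is $O(\log(1/\delta))$. This yields a per-individual bound $\eps_x = O(a\log(1/\delta))$, which by the individual privacy accounting lemma (see \Cref{app:TCT}) upgrades to a global $(O(a\log(1/\delta)), \delta)$-DP guarantee, matching the statement.
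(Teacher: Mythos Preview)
Your approach is correct and matches the paper's: accuracy via Laplace tails plus a union bound over the $R$ rounds, and privacy via a per-neighbor simulation argument that reduces to ``hot'' rounds, exploits the structural fact that the first $\Above$/$\Below$ output on a hot round removes $x$ from $\Sactive$, and then invokes target charging. Two small clarifications worth tightening before you write it out: (i) the removal rule guarantees at most \emph{one} actual target hit per individual, and the $O(\log(1/\delta))$ factor enters only through the TCT halting parameter $\tau$ (chosen so that $e^{-\tau/4}\le\delta$), not through the hit count itself; (ii) no separate ``individual-to-global'' aggregation lemma is needed---once the indistinguishability bound holds for every adjacent pair $(\bh,\bh')$, the $(\eps,\delta)$-DP guarantee is already established.
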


\section{Algorithms for Synthetic Histograms with Relative Accuracy}\label{sec:upper-bounds}

In this section, we introduce the $\PREM$ framework for generating synthetic histograms with relative error guarantees, starting with the case of approximate-DP.

\subsection{Approximate-DP Algorithm}

\label{sec:approx_DP_synthetic_data}

\begin{theorem} \label{thm:main}
For all $0< \eps \leq 1$, $\delta \in (0, 1)$, $\beta \in (0, 1)$, and $\zeta \in (0, \nicefrac12)$, 
there is an $(\eps, \delta)$-DP $(\zeta,\alpha,\beta,\cF)$-accurate synthetic data generator for any domain $\cX$ and query family $\cF$ with
\[
    \alpha = \tO\prn{\frac{1}{\zeta\eps} \cdot \Big(\log n \cdot  \log\frac1\delta \Big)^{\frac32} \cdot \log^{\frac12} |\cX| \cdot \log \prn{\textstyle \frac{|\cF|}{\beta}}
    }.
\]
\end{theorem}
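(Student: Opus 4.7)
The plan is to instantiate a private MWEM-style (multiplicative-weights / exponential mechanism) loop in which the private query-selection step is replaced by the RangeMonitor primitive of \Cref{prop:oim-approx-dp}. The crucial feature of OIM is that its privacy cost degrades only with the number of queries flagged \Above\ or \Below; queries classified as \Between\ are essentially free. This lets us sweep over the entire family $\cF$ at each iteration without paying per-query privacy, provided we control the total number of updates. The obstacle specific to relative error is that the tolerance interval $[\tau_\ell, \tau_u]$ for each query must scale with the magnitude of $f(\hbh)$, yet OIM's Laplace noise does not scale with anything. We resolve this via a dyadic multi-scale decomposition.

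Initialize $\hbh$ to be the uniform histogram on $\cX$ with total mass $n$, and process $L = O(\log n)$ scale levels indexed by $t = 0, 1, \ldots, L-1$, where the active scale at level $t$ is $n_t := n \cdot 2^{-t}$. At each level instantiate a fresh OIM on $\bh$ with noise parameter $a_t = \Theta(\eps / (L \log(1/\delta)))$. For every $f \in \cF$, query OIM against the interval $[\tau_\ell, \tau_u] = [(1-\zeta/2) f(\hbh) - \alpha_t,\ (1+\zeta/2) f(\hbh) + \alpha_t]$, where $\alpha_t$ is the OIM accuracy bound from \Cref{prop:oim-approx-dp}. When OIM returns \Above\ or \Below, perform a multiplicative-weights update $\hat{h}_x \gets \hat{h}_x \cdot \exp(\pm \eta f(x))$ (sign from the OIM response) and renormalize to total mass $n$; the level ends once a full sweep of $\cF$ returns \Between. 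The persistent $\Sactive$ inside OIM ensures that once a query's support is peeled at a coarse level, finer-scale iterations are not repeatedly derailed by it.

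Privacy follows by basic composition: the $L$ OIM instances are each $(\eps/L, \delta/L)$-DP by \Cref{prop:oim-approx-dp}, summing to $(\eps, \delta)$-DP overall. Utility follows from the standard MW potential argument with $\Phi = \mathrm{KL}(\bh \,\|\, \hbh)$, initially at most $\log|\cX|$. At level $t$, each update triggered by a flagged query decreases $\Phi$ by $\Omega(\eta^2)$; with $\eta \asymp \zeta$ this gives $O(\log|\cX| / \zeta^2)$ updates per level and total update count $R = \tilde O(\log n \cdot \log|\cX| / \zeta^2)$. Substituting into $C = (1/a_t) \log(R |\cF|/\beta)$ and union-bounding over the $|\cF|$ Laplace draws per sweep yields the per-level accuracy $\alpha_t$, whose maximum across $t$ gives the stated $\alpha$. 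A final pass verifies that for any $f \in \cF$, either $f(\bh) \le \alpha / \zeta$ (in which case the additive slack $\alpha$ already suffices) or the level on which $f(\bh) \asymp n_t$ certifies $|f(\hbh) - f(\bh)| \le \zeta\, f(\bh)$.

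The main obstacle is managing the coupling between levels through $\Sactive$. Whenever OIM returns \Above\ for some $f$, it excises $f^{-1}(1)$ from $\Sactive$, so subsequent queries see only $\bh|_{\Sactive}$ rather than $\bh$. For the MW analysis to deliver a clean potential decrease, we must maintain the invariant that at the start of level $t$, every query $f$ with $f(\bh|_{\Sactive}) \gg n_t$ is already $\zeta$-close to $f(\hbh)$; establishing that peeling at coarser levels does not remove so much of $\Sactive$ that finer levels cannot make progress is the most delicate part of the proof. A secondary subtlety is the parameter balancing responsible for the $\sqrt{\log|\cX|}$ and $(\log n \log(1/\delta))^{3/2}$ factors in the final bound: this requires threading $\eta$ and $a_t$ through a joint optimization that accounts for both per-level MW iterations and the OIM additive error accumulated across levels.
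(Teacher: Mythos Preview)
Your proposal has a real gap in the multiplicative-weights step. You assert that each OIM-flagged update decreases $\Phi=\mathrm{KL}(\bh/n\,\|\,\hbh/n)$ by $\Omega(\eta^2)$ with $\eta\asymp\zeta$, but the standard MW bound gives a per-step change of $-\eta\cdot\frac{|f(\bh)-f(\hbh)|}{n}+\frac{\eta^2}{8}$, so you need margin $|f(\bh)-f(\hbh)|/n=\Omega(\zeta)$. A query flagged at level $t$ only certifies a \emph{relative} gap $|f(\bh|_{\Sactive})-f(\hbh)|\gtrsim\zeta\, f(\hbh)$ with $f(\hbh)\asymp n_t=n2^{-t}$, hence margin $\Theta(\zeta\, 2^{-t})$. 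With $\eta=\Theta(\zeta)$ the potential change is $-\Theta(\zeta^2 2^{-t})+\Theta(\zeta^2)$, which is not even a decrease once $t$ exceeds a constant. The dyadic scales do nothing here, because $\Phi$ is a single global potential on the whole histogram; an update along a sparse $f$ moves it only in proportion to $f$'s mass. Taking $\eta$ level-dependent (say $\eta_t\asymp\zeta 2^{-t}$) would restore a decrease but blow the iteration count up to $4^t\log|\cX|/\zeta^2$, destroying the bound.

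This is exactly the obstruction called out in the paper's ``Proof Intuition'': for sparse queries, no single $f\in\cF$ can furnish margin $\Omega(\zeta)$. The paper's remedy is not scale stratification but \emph{aggregation}. $\FindMarginExample$ sweeps $\cF$ via OIM, collecting the (disjoint, thanks to $\Sactive$-peeling) supports of all flagged queries into $\Splus,\Sminus$; together with the residual $\Sactive$ these partition the current active domain, so one of the three carries at least $\tfrac13$ of the $\hbh$-mass, and \Cref{lem:bad-margin} shows the relative gap on that set is still $\ge\zeta/2$. The MW update is then performed on the \emph{indicator of this aggregated set}, whose margin is $\Omega(\zeta)$ irrespective of how sparse the individual $f$'s were. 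The $\Sactive$-peeling inside OIM is thus a device for making the flagged pieces disjoint so their inequalities sum; it is entirely internal to one call of $\FindMarginExample$, not a cross-level bookkeeping tool. Likewise, the outer $I=O(\log n)$ loop in the paper is not a dyadic decomposition of query magnitudes: it peels a constant fraction of the remaining \emph{true} mass $\|\bh^*|_{\cX^i}\|_1$ each round, and the $\sqrt{\log|\cX|}$ in the final bound arises from advanced composition over the $IT=O(\log n\cdot\log|\cX|/\zeta^2)$ calls to $\FindMarginExample$, not from any per-level optimization of $\eta$ and $a_t$.
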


\paragraph{Proof Intuition.} 
Let us start by recalling the PMWU framework  of~\citet{HardtR10,HardtLM12}, which uses a private version of the multiplicative weights update (MWU) rule and yields nearly tight guarantee in the additive-only setting. At a high level, this algorithm starts with a synthetic dataset $\hbh$ and iteratively updates it. In each iteration, PMWU identifies a query $f \in \cF$ with large error. The synthetic dataset $\hbh$ is then updated using an MWU rule.

A key parameter that governs the number of required iterations is the \emph{margin}, which can be defined as the normalized error of $f$ between the synthetic dataset $\hbh$ and the true dataset $\bh$  (i.e., $|f(\hbh) - f(\bh)| / n$). When the margin is $\gamma$, the number of required updates is $\tO(1/\gamma^2)$. 
As discussed in \Cref{tab:prior-work-our-results}, all previous works~\citep{HardtR10,HardtLM12,GuptaRU12} incur error that is polynomial in $n$ 
because the number of updates required in their  algorithm is polynomial in $n$. 
This is because the guaranteed margin of a large additive-gap query is only 
$\alpha / n$, which leads to a bound of 
$\tOmega((n/\alpha)^2)$ iterations in PMWU. 

At a high level, our main observation is that we can make this margin $\zeta$ instead of $\alpha/n$, allowing us to (drastically) 
reduce the number of iterations to $\tO(1/\zeta^2)$. 
However, achieving such a margin is quite challenging: previous frameworks~\citep{HardtR10,HardtLM12,GuptaRU12} use either the 
exponential mechanism or \textsc{SVT} to select a \emph{single} query from $\cF$ for which the current distribution performs poorly and use it
as the example for update. Unfortunately, this does not suffice: if all counting queries are sparse (e.g.,~counting a single domain element), 
then it is possible that the margin of any query in $\cF$ here is only $O(\alpha/n)$.
Our key technique to overcome this is the observation that when queries are sparse, we can run $\OutsideIntervalMonitor$, which using the target charging can select \emph{multiple} queries that---after appropriate corrections from previous queries---differ by more than $1 \pm \zeta$ from the target value. Roughly speaking, this allows us to combine all these queries together to construct a bad ``example'' with margin $\Omega(\zeta)$.

\subsubsection{Finding a Bad Margin Example with $\OutsideIntervalMonitor$}

As discussed above, a key ingredient of our approach is identifying a bad ``example'' i.e.~a large support set where the candidate histogram fails to multiplicatively approximate the true counts; if such example does not exist, we certify a uniform relative approximation for all queries over a large set. We present this subroutine, \Cref{alg:find-example}, together with its privacy and accuracy guarantees. This algorithm maintains two violating sets, $\cS^+$, $\cS^-$: these sets aggregate the supports of queries that violate the respective upper and lower bounds imposed by the desired relative approximation. 

\begin{algorithm}[t]
\small
\caption{$\FindMarginExample_{\cF,\eps,\delta,\beta,\zeta}$}
    \label{alg:find-example}
        \textbf{Parameters:}
            {\footnotesize $\bullet$ } privacy parameters $ \eps >0$ and $0 < \delta < 1$; 
            \,
            {\footnotesize $\bullet$ } confidence parameter $0 < \beta < 1$;\\
        \phantom{{\bf Parameters}:}
            {\footnotesize $\bullet$ } approximation factor $0 < \zeta < 1 / 2$;
            \quad \quad \quad
            {\footnotesize $\bullet$ } set of counting queries $\cF\subseteq \{0, 1\}^{\cX}$;\\[-2mm]
        
        \textbf{Input:}
        {\footnotesize $\bullet$ } private input histogram $\bh^{\ast} \in \Z_{\ge 0}^{\cX}$ with $\|\bh^{\ast}\|_1=n$;\\
        \phantom{{\bf Input:}}
        {\footnotesize $\bullet$ } currently estimated histogram $\hbh\in \R_{\geq 0}^{\cX}$;\\
        \phantom{{\bf Input:}}
        {\footnotesize $\bullet$ } active set $\cY\subseteq \cX$;\\[-2mm]
        
        $\Splus, \Sminus \gets \emptyset$ and 
        $\Sactive \gets \cY$\\
        $\OIM \gets \OutsideIntervalMonitor^{\mathrm{apx}}_{\bh^{\ast},a,\cY}$ for $a = \eps / O(\log(1/\delta))$\\[1mm]
        \mbox{}\algcomment{initialized with $a$ such that $\OIM$ satisfies $(\eps, \delta)$-DP (via \Cref{prop:oim-approx-dp})}\\
        $\alpha_0\gets \frac{4(1+\zeta)}{a}\ln\big(\frac{|\cF|}{\beta}\big)$\algcomment{equals $2(1+\zeta)C$ for $C$ given in \Cref{prop:oim-approx-dp} for $R = |\cF|^2$.}\\
        $\Factive\gets \cF$\; \hfill\algcomment{maintains a set of queries not already handled}
        
        \Repeat{\textsc{Accurate}}{
            \textsc{Accurate} $\gets$ \textsc{True}
            
            \For{$f\in\Factive$}{
                $\tau_u \gets \frac{1}{(1-\zeta)}\big( f(\hbh|_{\Sactive})+\frac{\alpha_0}{2}\big)$ and $\tau_\ell \gets \frac{1}{(1+\zeta)}\big(f(\hbh|_{\Sactive})-\frac{\alpha_0}{2}\big)$

                $s \gets \OIM(f, \tau_u, \tau_\ell)$

                \If{$s \ne \Between$}{
                    \If{$s = \textsc{Above}$}{
                        $\Splus\gets\Splus\cup(\Sactive\cap f^{-1}(1))$\;
                    }
                    \Else {
                        $\Sminus\gets\Sminus\cup(\Sactive\cap f^{-1}(1))$\;
                    }
                    
                    $\Sactive\gets \Sactive \smallsetminus f^{-1}(1)$\; \algcomment{Identical to $\Sactive$ maintained in state of $\OIM$.}\\
                    $\Factive\gets \Factive\smallsetminus \{f\}$\;\\
                    \textsc{Accurate} $\gets$  \textsc{False}\;
                }
            }
        }
        \Return{$(\theta,\cS)\gets \begin{cases}
            (\plus,\Splus) & \text{\em if } \ind_{\Splus}(\hbh) \ge \max\{ \ind_{\Sminus}(\hbh), \ind_{\Sactive}(\hbh) \} \\
            (\minus,\Sminus) & \text{\em if } \ind_{\Sminus}(\hbh) \ge \max\{ \ind_{\Splus}(\hbh), \ind_{\Sactive}(\hbh) \}  \\
            (\equal, \Sactive) & \text{\em if } \ind_{\Sactive}(\hbh) \ge \max\{ \ind_{\Sminus}(\hbh), \ind_{\Splus}(\hbh) \} 
            \end{cases}$
        }
\end{algorithm}

\begin{lemma} \label{lem:bad-margin}
    For all $\cF$, $0<\eps \leq 1$, 
    $\delta, \beta, \zeta \in (0, 1)$, $\FindMarginExample_{\cF,\eps,\delta,\beta,\zeta}$ (\Cref{alg:find-example}) satisfies $(\eps, \delta)$-DP. 
    Moreover, given active set $\cY\subseteq\cX$, private histogram $\bh^{\ast}\in\N_+^{\cX}$, 
    and an estimate histogram $\hbh\in\R_+^{\cX}$, the output $(\theta, \cS)$ satisfies:
    \begin{itemize}
    \item $\ind_\cS(\hbh) \geq \frac{1}{3} \ind_{\cY}(\hbh)$.
    \item With probability $1-\beta$ (see the pseudocode for the value of $\alpha_0$):
        \begin{itemize}
            \item If $\theta = \equal$, for all $f\in \cF$, $(1 - \zeta) \cdot f(\bh^{\ast}|_{\cS}) - \alpha_0 \leq f(\hbh|_{\cS})  \leq (1 + \zeta) \cdot f(\bh^{\ast}|_{\cS}) + \alpha_0$.
            \item If $\theta = \plus$, $\ind_{\cS}(\bh^{\ast})  \geq (1 + \zeta) \cdot \ind_{\cS}(\hbh)$.
            \item If $\theta = \minus$, $\ind_{\cS}(\bh^{\ast}) \leq (1 - \zeta/2) \cdot \ind_{\cS}(\hbh)$.
        \end{itemize}
    \end{itemize}
\end{lemma}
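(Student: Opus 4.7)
The plan is to decouple the proof into three pieces corresponding to the three assertions of the lemma: privacy, the $\frac{1}{3}$-mass guarantee on $\cS$, and the three accuracy dichotomies indexed by $\theta$. Throughout, the only interaction with the private histogram $\bh^{\ast}$ is through the $\OIM$ instance, so privacy will be a postprocessing argument, while the structural and accuracy claims follow from bookkeeping on the sets $\Splus, \Sminus, \Sactive$ together with the Between/Above/Below guarantees of \Cref{prop:oim-approx-dp}.

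\textbf{Privacy.} First I would bound the total number of $\OIM$ queries: each time a query returns something other than \Between{} it is removed from $\Factive$, so across all outer iterations there are at most $R = O(|\cF|^2)$ queries. Since the bound on $\eps$ in \Cref{prop:oim-approx-dp} depends only on $a$ and $\delta$ (and not on $R$), the choice $a = \eps / O(\log(1/\delta))$ already makes $\OIM$ satisfy $(\eps,\delta)$-DP. Every other computation in $\FindMarginExample$ is a deterministic function of the $\OIM$ transcript and the non-private inputs $\hbh$, $\cY$, $\cF$, so the final output is a postprocessing and inherits $(\eps,\delta)$-DP.

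\textbf{Mass bound on $\cS$.} The key observation is that $\Splus, \Sminus, \Sactive$ form a partition of $\cY$ at termination: the loop only ever moves elements out of $\Sactive$, and when it does so it places them into exactly one of $\Splus$ or $\Sminus$ (determined by the \Above{}/\Below{} verdict). Consequently $\ind_{\cY}(\hbh) = \ind_{\Splus}(\hbh) + \ind_{\Sminus}(\hbh) + \ind_{\Sactive}(\hbh)$, so the maximum summand is $\ge \tfrac{1}{3}\ind_{\cY}(\hbh)$, which is exactly how $\cS$ is chosen.

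\textbf{Accuracy on the good event.} Condition on the probability-$(1-\beta)$ event of \Cref{prop:oim-approx-dp}, where all Between/Above/Below returns are within additive $C = \frac{1}{a}\ln(R/\beta)$. Note that $\alpha_0 = \frac{4(1+\zeta)}{a}\ln(|\cF|/\beta) \ge 2(1+\zeta)C$ since $R = |\cF|^2$. For the $\theta = \equal$ case: in the terminating outer iteration every remaining query returned \Between, so plugging $\tau_u, \tau_\ell$ from the pseudocode into the \Between{} guarantee and multiplying through by $(1\mp\zeta)$ yields $(1-\zeta)f(\bh^{\ast}|_\cS) - \alpha_0 \le f(\hbh|_\cS) \le (1+\zeta)f(\bh^{\ast}|_\cS) + \alpha_0$, where the choice $\alpha_0 \ge 2(1+\zeta)C$ is precisely what absorbs the per-query slack. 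For queries $f$ removed in earlier iterations, $\cS \cap f^{-1}(1) = \emptyset$ (since $\Sactive$ only shrinks), so $f(\hbh|_\cS) = f(\bh^{\ast}|_\cS) = 0$ and the inequality holds trivially.

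\textbf{The $\theta = \plus$ and $\theta = \minus$ cases.} This is where I expect the main subtlety. Enumerate the queries $f_1,\ldots,f_k$ that triggered \Above{} responses, and let $\cS_i$ denote the active set at that moment. A critical disjointness fact, which I would state and prove explicitly, is that the increments $\cS_i \cap f_i^{-1}(1)$ are pairwise disjoint, because each such increment is immediately removed from $\Sactive$ before later queries see it. Hence $\ind_{\Splus}(\bh^{\ast}) = \sum_i f_i(\bh^{\ast}|_{\cS_i})$ and $\ind_{\Splus}(\hbh) = \sum_i f_i(\hbh|_{\cS_i})$. Applying the \Above{} bound $f_i(\bh^{\ast}|_{\cS_i}) \ge \tau_u - C = \tfrac{1}{1-\zeta}\bigl(f_i(\hbh|_{\cS_i}) + \tfrac{\alpha_0}{2}\bigr) - C$ and summing gives
\[
\ind_{\Splus}(\bh^{\ast}) \;\ge\; \tfrac{1}{1-\zeta}\, \ind_{\Splus}(\hbh) + k\bigl(\tfrac{\alpha_0}{2(1-\zeta)} - C\bigr) \;\ge\; (1+\zeta)\,\ind_{\Splus}(\hbh),
\]
where both the slack-absorption $\tfrac{\alpha_0}{2(1-\zeta)} \ge C$ (from $\alpha_0 \ge 2(1+\zeta)C \ge 2(1-\zeta)C$) and the inequality $\tfrac{1}{1-\zeta} \ge 1+\zeta$ are used. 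The $\theta = \minus$ case is symmetric using the \Below{} guarantee, and the same algebra yields $\ind_{\Sminus}(\bh^{\ast}) \le \tfrac{1}{1+\zeta}\,\ind_{\Sminus}(\hbh) \le (1-\zeta/2)\,\ind_{\Sminus}(\hbh)$, using $(1+\zeta)(1-\zeta/2) \ge 1$. The delicate part is verifying that the constants in $\alpha_0$ are calibrated so that the per-query additive $C$ terms are strictly dominated by the $\alpha_0/2$ contributions, leaving a clean multiplicative bound.
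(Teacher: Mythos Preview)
Your proposal is correct and follows essentially the same approach as the paper's proof: privacy by postprocessing of the $\OIM$ transcript, the $\tfrac{1}{3}$-mass bound from the partition $\Splus \sqcup \Sminus \sqcup \Sactive = \cY$, and the three accuracy cases handled by conditioning on the good event of \Cref{prop:oim-approx-dp}, using disjointness of the removed slices to sum the per-query \Above{}/\Below{} inequalities. The calibration checks $\alpha_0 \ge 2(1+\zeta)C$, $\tfrac{1}{1-\zeta} \ge 1+\zeta$, and $\tfrac{1}{1+\zeta} \le 1-\zeta/2$ that you flag are exactly the ones the paper uses.
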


\begin{proof}
The privacy guarantee follows immediately from \Cref{prop:oim-approx-dp} (for suitable $a = \eps / O(\log \frac1\delta)$), since the private histogram $\bh^\ast$ is only accessed through queries to $\OutsideIntervalMonitor$.

Since $\Splus$, $\Sminus$, $\Sactive$ form a partition of $\cY$, we have $\max\{ \ind_{\Splus}(\hbh), \ind_{\Sminus}(\hbh), \ind_{\Sactive}(\hbh) \}\geq \|\hbh\|_1/3$. And hence $\ind_{\cS}(\hbh) \geq \frac{1}{3}  \ind_\cY(\hbh)$.

Next, we proceed to the second part. Note that there can be at most $|\cF|$ iterations in the \textbf{repeat}--\textbf{until} loop, since $|\Factive|$ decreases every iteration if the loop does not terminate. Therefore, the number of queries made to $\OIM$ is at most $|\cF|^2$. Thus, from \Cref{prop:oim-approx-dp}, with probability at least $1 - \beta$, all the responses of $\OIM$ are accurate to within 
$C = \frac2a \log \frac{|\cF|}{\beta} = \frac{\alpha_0}{2(1+\zeta)}$. 
Conditioned on this event, we obtain the accuracy guarantee for each of the three cases of $\theta$ as follows:
    \begin{description}[leftmargin=5pt]
        \item[{\boldmath Case $\theta=\textsc{Approx}$}:] Let $\Factive$ refer to the state of this set at the end of the algorithm. For $f\notin\Factive$ it holds that $f(x) = 0$ for all $x \in \Sactive$, and hence $f(\hbh|_{\Sactive}) = f(\bh^{\ast}|_{\Sactive}) = 0$.

        In the last loop of \textbf{repeat}--\textbf{until}, we have that $\OIM(f, \tau_u, \tau_\ell)$ returns a response of $\Between$ for all $f \in \Factive$. Hence conditioned on the accuracy guarantee of $\OIM$, we have that
            \begin{eqnarray*}
                \frac{1}{1+\zeta}\left( f(\hbh|_{\Sactive}) - \frac{\alpha_0}{2} \right) - C\leq & 
                f(\bh^{\ast}|_{\Sactive})
                & \leq \frac{1}{1-\zeta}\left( f(\hbh|_{\Sactive}) + \frac{\alpha_0}{2} \right) + C \\
                \Longrightarrow
                (1-\zeta)f(\bh^{\ast}|_{\Sactive}) -\alpha_0 \leq 
                & f(\hbh|_{\Sactive}) &
                \leq (1+\zeta)f(\bh^{\ast}|_{\Sactive}) + \alpha_0.
            \end{eqnarray*}
            We conclude that $\hbh|_{\Sactive}$ is $(\zeta,\alpha_0,\Factive)$-accurate, and in conjunction with the perfect accuracy over $\cF\setminus\Factive$ concludes the proof of the claim.
        \item[{\boldmath Case $\theta=\plus$}:] Conditioned on the accuracy guarantee of $\OIM$, at every query $(f,\tau_\ell,\tau_u)$ where $\OIM$'s output is $\Above$, we have that
        \begin{align*}
        \textstyle
        \ind_{\Sactive\cap f^{-1}(1)}(\bh^{\ast})
            & \textstyle = f(\bh^{\ast}|_{\Sactive})
            ~\geq~ \frac{1}{(1-\zeta)}\big( f(\hbh|_{\Sactive}) + \frac{\alpha_0}{2} \big) - C\\
            &\textstyle\geq \frac{1}{(1-\zeta)} \cdot   \ind_{\Sactive\cap f^{-1}(1)}(\hbh) 
            ~\geq~ (1+\zeta) \cdot \ind_{\Sactive\cap f^{-1}(1)}(\hbh),
        \end{align*}
        where $\Sactive$ is the state before the query is performed.
            Finally, since the subsets $\Sactive\cap f^{-1}(1)$
        obtained are disjoint across iterations, and their union is $\Splus$, adding up these inequalities we get
            $\ind_{\Splus}(\bh^{\ast}) 
            \geq (1+\zeta) \cdot \ind_{\Splus}(\hbh)$.
        \item[{\boldmath Case $\theta=\minus$}:] Conditioned on the accuracy guarantee of $\OIM$, at every query $(f, \tau_\ell, \tau_u)$ where $\OIM$'s output is $\Below$, we have 
        (since $\alpha_0 = 2(1+\zeta) C$), that
        \begin{align*}
            \textstyle \ind_{\Sactive\cap f^{-1}(1)}(\bh^{\ast})
            &\textstyle= f(\bh^{\ast}|_{\Sactive})
            \leq \frac{1}{(1+\zeta)}\big( f(\hbh|_{\Sactive}) - \frac{\alpha_0}{2} \big) + C\\
            &\textstyle\leq \frac{1}{(1+\zeta)} \cdot  \ind_{\Sactive\cap f^{-1}(1)}(\hbh) 
            \leq (1-\frac\zeta2) \cdot \ind_{\Sactive\cap f^{-1}(1)}(\hbh),
        \end{align*}
        where $\Sactive$ is the state before the query is performed.
        Finally, since the subsets $\Sactive\cap f^{-1}(1)$
        obtained are disjoint across iterations, and their union is $\Sminus$, adding up these inequalities we get
        $
            \ind_{\Sminus}(\bh^{\ast})  
            \leq (1-\frac\zeta2) \cdot \ind_{\Sminus}( \hbh)
        $.
    \end{description}
    \vspace{-6mm}
\end{proof}

\subsubsection{$\PREM$ and Proof of \Cref{thm:main}}

We now present $\PREM$ (\Cref{alg:dp-mwu}) for $\delta>0$ (the case of $\delta=0$ is considered in \Cref{sec:pure_DP_UB}), that underlies \Cref{thm:main}. We sketch the main idea at an intuitive level.
This algorithm operates in multiple rounds, which work as follows with high probability.

At the start of round $i$, an ``active set'' $\cX^i$ is maintained such that we have an estimate $\hbh$ supported on ${\cX \smallsetminus \cX^i}$ satisfying $\hbh \approx_{\cF,\zeta,\alpha_*} \bh^*|_{\cX \smallsetminus \cX^i}$ (denoting that $f(\hbh) \in (1 \pm \zeta) f(\bh^{\ast}|_{\cX \smallsetminus \cX^i}) \pm \alpha_*$ for all queries $f \in \cF$).
At the end of round $i$, we identify a new set $\cS^i \subseteq \cX^i$ such that $\bh^{\ast}|_{\cS^i}$ contains a constant fraction of the active set mass of $\bh^{\ast}|_{\cX^i}$, and an estimate $\hbh^{i}$ supported only on $\cS^i$ such that $\hbh^{i} \approx_{\cF, \zeta, \alpha_0} \bh^{\ast}|_{\cS^i}$. By setting $\hbh \gets \hbh + \hbh^i$ and $\cX^{i+1} \gets \cX^i \smallsetminus \cS^i$, the invariant is maintained (for $\alpha_* \gets \alpha_* + \alpha_0$), and the active set mass shrinks by a constant factor. Thus, in at most $I = O(\log n)$ rounds, most of the mass of $\bh^*$ is accounted for.

To identify the subset $\cS^i$ and the estimate $\hbh^i$ at every round, we start with $\hbh^i$ being uniform over $\cX^i$ and use $\FindMarginExample$ (\Cref{alg:find-example}) iteratively to either update $\hbh^i$ using a MWU or certify its accuracy on a large support $\cS^i$. Using a potential function argument, we show that this terminates in a small number of rounds with high probability.
\begin{algorithm}[t]
\small
    \caption{$\PREM_{\cF,\eps,\delta,\beta,\zeta}$ : \textsc{Private Relative Error MWU}}
    \label{alg:dp-mwu}
        \textbf{Parameters:}
            {\footnotesize $\bullet$ } privacy parameters $\eps > 0$ and $0 \leq \delta < 1$;
            \ \ {\footnotesize $\bullet$ } confidence parameter $0 < \beta < 1$;\\
        \phantom{\textbf{Parameters:}}
            {\footnotesize $\bullet$ } approximation factor $0 < \zeta < 1 / 2$;
            \quad \quad \quad
            {\footnotesize $\bullet$ } query family $\cF\subseteq \{0, 1\}^{\cX}$;

        \textbf{Input:} input histogram $\bh^{\ast} \in \N^{\cX}$ with $\|\bh^{\ast}\|_1=n$.
        
        $I \gets \lceil \frac{\log n}{\ln 6/5}\rceil$\\
        $T \gets \lceil\frac{128\log |\cX|}{\zeta^2}\rceil$

        $\eta\gets \zeta/4$; $\beta'\gets \beta/(2IT)$; $\cX^1 \gets \cX$
        
        \If{$\delta=0$}{
            $\delta'\gets 0$;\,\, $\eps'\gets\eps/(2IT)$\;\\
            $a\gets \eps/I$;\,\, $\alpha\gets \tilde O\Big( \max\big\{\sqrt{\frac{n \log^3 n}{\zeta^2\eps}\log|\cX|\log\big(\frac{|\cF|}{\beta}\big)},\frac{\log n}{\eps} \big\} \Big)$\;
        }\Else{
            $\delta'\gets\delta/(4IT)$\;\\
            Let $\eps'>0$ be the unique solution to $\frac{\eps}{2} =\eps' O(\log(1/\delta'))\big( \sqrt{2IT\ln(1/\delta')}+IT\big(\frac{e^{\eps'}-1}{e^{\eps'}+1}\big)\big)$\\
            \mbox{}\algcomment{$IT$-fold composition of $\OutsideIntervalMonitor$ must satisfy $(\eps/2,\delta/2)$-DP.}\\
            $a\gets \eps/(4\sqrt{2I\log(I/\delta)})$; 
            $\alpha \gets 
            \frac{200 I}{\eps'}\log\big(\frac{4|\cF|}{\beta'}\big)$\;
        }

        $\hbh \gets {\bm 0} \in \Z_{\ge 0}^{\cX}$\algcomment{Current estimate of the histogram}

        \For{$i = 1$ to $I$}{
            $\tilde n_i \gets \|\bh^{\ast}|_{\cX^i}\|_1+\mbox{Lap}\big(1/a\big)$

            \If{$\tilde n_i\leq \alpha/4$}{
                \textsc{Break}
            }
            $\hbh^{i} \gets \frac{\tilde n_i}{|\cX^i|}\ind_{\cX^i}$ \algcomment{Initialize as uniform over active set, with roughly the right total mass}\\
            \For{$t = 1,\ldots,T+1$}{
        	\If{$t=T+1$}{
        		  \Return{\textsc{Failure}}
        	}
        	$(\theta^i, \cS^i) \gets \textsc{FindMarginExample}_{\cF,\eps',\delta',\beta',\zeta}(\bh^{\ast}, \hbh^{i}, \cX^i)$\\
        	\If{$\theta^i = \equal$}{
                        $\hbh \gets \hbh + \hbh^{i}|_{\cS^i}$\\
            		$\cX^{i+1} \gets \cX^i \setminus {\cS^i}$\\
            		\textsc{Break}
        	}
                \Else{
                    $\hbh^{i} \gets \tilde n_i \cdot \frac{\hbh^{i}\odot \exp\big\{\theta^i\eta\ind_{\cS^i}\big\}}{\|\hbh^{i}\odot \exp\big\{\theta^i\eta\ind_{\cS^i}\big\}\|_1} $\algcomment{where $a \odot b$ denotes pointwise product of $a, b \in \R_{\ge 0}^{\cX}$.}
                }
            }
        }

        \Return{$\hbh$}
\end{algorithm}

Before proving the theorem, we establish a non-failure probability guarantee for the algorithm.

\begin{claim} \label{claim:failure_PREM}
\Cref{alg:dp-mwu} returns \textsc{Failure} with probability at most $2\beta'IT$.
\end{claim}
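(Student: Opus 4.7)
The plan is to identify the only way the algorithm can return \textsc{Failure}---namely, the inner $t$-loop reaching $t = T + 1$ in some outer round $i \le I$---and to rule that event out by combining (a) a union bound over the algorithm's random draws with (b) a standard KL-potential argument for multiplicative-weights updates. Since \textsc{Failure} occurs only if every one of $T$ consecutive calls to $\textsc{FindMarginExample}$ in the same round returns $\theta^i \neq \equal$, it suffices to show that with probability at least $1 - 2\beta' IT$ this cannot happen in any round.

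The first step is to define two high-probability good events and union-bound them. Event $\mathrm{E}_1$: for each of the (at most) $I$ Laplace draws used to form $\tilde n_i$, the noise is small, e.g., $|\tilde n_i - \|\bh^{\ast}|_{\cX^i}\|_1| \le \alpha/8$; by the Laplace tail bound and the chosen scale $1/a$, each draw satisfies this with probability $\ge 1 - \beta'$, so all $I$ together do with probability $\ge 1 - I\beta'$. Event $\mathrm{E}_2$: every one of the (at most) $IT$ calls to $\textsc{FindMarginExample}$ satisfies the high-probability conclusions of \Cref{lem:bad-margin} (invoked with confidence $\beta'$), which by a union bound also happens with probability $\ge 1 - IT\beta'$. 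Together $\Pr[\mathrm{E}_1 \cap \mathrm{E}_2] \ge 1 - 2\beta' IT$, so it is enough to prove that \textsc{Failure} never occurs on $\mathrm{E}_1 \cap \mathrm{E}_2$.

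For a fixed round $i$ with $\tilde n_i > \alpha/4$ (otherwise the outer loop breaks before the inner loop begins), I would track the potential $\Phi_t := \mathrm{KL}(q \,\|\, p_t)$, with $p_t := \hbh^{i}/\tilde n_i$ the current normalized weight vector and $q := \bh^{\ast}|_{\cX^i}/\|\bh^{\ast}|_{\cX^i}\|_1$. The uniform initialization on $\cX^i$ gives $\Phi_1 \le \log |\cX^i| \le \log |\cX|$, and $\Phi_t \ge 0$ at every $t$. A standard exponential-weights computation together with $\log(1+x) \le x$ yields
\[
\Phi_t - \Phi_{t+1} \;\ge\; \theta^i \eta \, q(\cS^i) \;-\; (e^{\theta^i \eta} - 1)\, p_t(\cS^i).
\]
Using $\mathrm{E}_1$ to absorb the relative noise in $\tilde n_i$, the multiplicative conclusions of \Cref{lem:bad-margin} translate into $q(\cS^i) \ge (1 + \zeta/2)\, p_t(\cS^i)$ when $\theta^i = \plus$ and $q(\cS^i) \le (1 - \zeta/4)\, p_t(\cS^i)$ when $\theta^i = \minus$. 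Combined with the deterministic bound $p_t(\cS^i) \ge 1/3$ from \Cref{lem:bad-margin} and the choice $\eta = \zeta/4$, a Taylor expansion of $e^{\pm\eta} - 1$ yields a per-step drop $\Phi_t - \Phi_{t+1} = \Omega(\zeta^2)$ whenever $\theta^i \neq \equal$. Since $T = \lceil 128 \log|\cX|/\zeta^2 \rceil$ such drops would force $\Phi_{T+1} < 0$, at most $T - 1$ non-$\equal$ iterations can occur before $\textsc{FindMarginExample}$ must return $\theta^i = \equal$, so $t$ never reaches $T + 1$.

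The main obstacle will be the quantitative translation step above: one must verify that the noise tolerance in $\mathrm{E}_1$ is small enough, relative to the effective mass $\|\bh^{\ast}|_{\cX^i}\|_1 \gtrsim \tilde n_i$, that the $(1 + \zeta)$ and $(1 - \zeta/2)$ multiplicative gaps from \Cref{lem:bad-margin} survive renormalization with enough slack to dominate the second-order term $(e^\eta - 1 - \eta)\, p_t(\cS^i)$ in the per-step analysis, and that the constants hardwired into the pseudocode (the noise scale $a$, the break threshold $\alpha/4$, the step size $\eta$, and the iteration count $T$) are all consistent with the resulting arithmetic.
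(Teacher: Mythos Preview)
Your proposal is correct and follows essentially the same route as the paper: the same two good events union-bounded to $2\beta' IT$, the same KL potential $\Phi_t=\mathrm{KL}(q\,\|\,p_t)$, and the same per-step $\Omega(\zeta^2)$ drop extracted from the margin and $p_t(\cS^i)\ge 1/3$ guarantees of \Cref{lem:bad-margin}. The only cosmetic difference is that the paper bounds the log-partition via Hoeffding's lemma (an additive $\eta^2/8$) and treats the $n_i$-vs.-$\tilde n_i$ mismatch as a separate correction term, whereas you use $\log(1+x)\le x$ with a Taylor bound on $e^{\pm\eta}-1$ and absorb the mismatch into the multiplicative gap; both variants close with the stated constants.
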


\begin{proof}
First, notice that as $i$ plays no role in this analysis, we will omit its dependence. In this context, we work on the restricted sample space $\cX\gets\cX^i$, together with private data $\bh^{\ast}\gets \bh^{\ast}|_{\cX^i}$, sample size $n\gets n_i:=\|\bh^{\ast}|_{\cX^i}\|_1$ and $\tilde n\gets \tilde n_i=\|\hbh^{i,t}\|_1$. Note that by the properties of Laplace noise, with probability $1-I\beta'$, 
\begin{equation} \label{eqn:Laplace_concentration_MWU}
\textstyle |n_i-\tilde n_i|\leq \frac{4\sqrt{2I\ln(I/\delta)}}{\eps}\ln\big(\frac{1}{\beta'}\big) \quad(\forall i=1,\ldots,I). 
\end{equation}
This concentration bound implies that if $\tilde n_i> \alpha/4$ then $n_i> \alpha/6$, by the definition of $\alpha$.

Next we proceed with a large-margin analysis of the MWU method. 
In particular, \Cref{alg:find-example} provides the following guarantee (\Cref{lem:bad-margin}): denoting by $(\theta_t,\cS_t)$ its output and $\hbh^t$ the histogram $\hbh^i$, at their $t$-th inner iteration, 
if $\theta_t\in\{\plus,\minus\}$ then with probability $1-\beta'IT$, for all $t,i$ 
\begin{align} 
\theta_t \left( \ind_{\cS_t}(\bh^{\ast}) - \ind_{\cS_t}(\hbh^t) \right) &\geq \frac{\zeta}{2} \ind_{\cS_t}(\hbh^t)  \label{eqn:large_margin_convergence} \\
\ind_{\cS_t}(\hbh) &\geq \frac13 \ind_{\cX}(\hbh). \label{eqn:large_weight_convergence}
\end{align}
We now proceed with a potential function analysis for upper bounding the failure probability~\citep[see, e.g., Chapter 7 in][]{Mohri:2018}. Consider the
potential function $\Phi_t=\mbox{KL}\big(\frac{\bh^{\ast}}{\|\bh^{\ast}\|_1}\big|\big|\frac{\hbh^t}{\|\hbh^t\|_1}\big)=\sum_{x\in \cX} \frac{\bh^{\ast}(x)}{n}\ln\Big( \frac{\bh^{\ast}(x)/n}{\hbh^t/\tilde n} \Big)$. 
Note that $\Phi_1 \le \ln |\cX|$ since $\hbh^t / \|\hbh^t\|_1$ is the uniform distribution and $\Phi_{T+1} \ge 0$. Hence. $\Phi_{T+1}-\Phi_1 \geq - \ln|\cX|.$ On the other hand, under the events established above:
\begin{align*}
\Phi_{t+1}-\Phi_t 
&= \sum_{x\in \cX} \frac{\bh^{\ast}(x)}{n}\ln\Big( e^{-\theta_t\eta \ind_{\cS_t}(x)} \sum_{y\in \cX}\frac{\hbh^t(y)}{\tilde n}e^{\theta_t\eta \ind_{\cS_t}(y)} \Big) \\
&=-\sum_{x\in \cX}\frac{\bh^{\ast}(x)}{n}\cdot\theta_t\eta\ind_{\cS_t}(x)+\ln\Big( \sum_{y\in \cX} \frac{\hbh^t(y)}{\tilde n}\exp\{\theta_t\eta\ind_{\cS_t}(y)\} \Big).
\end{align*}
Let now $P_t$ be the probability over $\cX$ where $P_t(x)=\hbh^t(x)/\tilde n$, and let $\mu_t=\mathbb{E}_{y\sim P_t}[\theta_t\eta \cdot \ind_{\cS_t}(y)]=\theta_t\eta \cdot \ind_{\cS_t}(\frac{\hbh^t}{\tilde n})$. Hoeffding's bound in \citet[Lemma B.7]{ShalevShwartz:2014} implies 
\[ \ln\Big( \sum_{y\in \cX} \frac{\hbh^t(y)}{\tilde n}\exp\{\theta_t\eta\ind_{\cS_t}(y)-\mu_t\} \Big) = \ln\Big( \mathbb{E}_{y\sim P_t} \big[\exp\{\theta_t\eta   \ind_{\cS_t}(y)-\mu_t\}\big]\Big) \leq \frac{\eta^2}{8}. \]
Re-arranging the potential drop to incorporate this term, we conclude that
\begin{align*}
\Phi_{t+1}-\Phi_t 
&\leq-\theta_t\eta \cdot \left( \ind_{\cS_t}\left(\frac{\bh^{\ast}}{n}\right) - \ind_{\cS_t}\left(\frac{\hbh^t}{\tilde n}\right) \right)
+\frac{\eta^2}{8}\\
&= -\frac{\theta_t\eta}{n} \cdot ( \ind_{\cS_t}(\bh^{\ast}) - \ind_{\cS_t}(\hbh^t) ) - \theta_t\eta \cdot \left(\frac1n-\frac{1}{\tilde n}\right) \cdot \ind_{\cS_t}(\hbh^t) + \frac{\eta^2}{8}.
\end{align*}
Now to bound the resulting expression, note that the first summand can be bounded using \eqref{eqn:large_margin_convergence}, 
whereas for the second one we can use \eqref{eqn:Laplace_concentration_MWU} to conclude that
\[\textstyle - \theta_t\eta\big(\frac1n-\frac{1}{\tilde n}\big) \cdot \ind_{\cS_t}(\hbh^t) \leq \frac{\eta}{n\tilde n} \frac{4\sqrt{2I\ln(I/\delta)}}{\eps}\log \frac{1}{\beta'} \cdot \ind_{\cS_t}(\hbh^t). \]
Now, noting that upon non-termination of the algorithm, $\tilde n > \frac{\alpha}{4} \geq \frac{64\sqrt{2I\ln(I/\delta)}}{\zeta\eps}\ln \frac{1}{\beta'}$, we have
\begin{align*} 
&\textstyle-\frac{\zeta\eta}{2n} \cdot \ind_{\cS_t}(\hbh^t) +\frac{\eta}{n\tilde n} \frac{4\sqrt{2I\ln(I/\delta)}}{\eps}\log \frac{1}{\beta'} \cdot \ind_{\cS_t}(\hbh^t) \textstyle = -\frac{\zeta\eta}{2n}\Big(1-\frac{8\sqrt{2I\ln(I/\delta)}}{\zeta\tilde n\eps}\log \frac{1}{\beta'} \Big) \cdot \ind_{\cS_t}(\hbh^t) \\
& \leq -\frac{3\zeta \eta}{8n} \cdot \ind_{\cS_t}(\hbh^t) \textstyle \leq -\frac{\zeta \eta}{8n} \cdot \ind_{\cX}(\hbh^t) \leq-\frac{\zeta \eta}{8n} \Big( n-\frac{4\sqrt{2I\ln(I/\delta)}}{\eps}\ln \frac{1}{\beta'} \Big) 
\leq  -\frac{\zeta \eta}{16}, 
\end{align*}
where in the third inequality we used again the bound \eqref{eqn:Laplace_concentration_MWU}, and in the last one the fact that upon non-termination and \eqref{eqn:Laplace_concentration_MWU}, $n\geq \frac{\alpha}{6}\geq \frac{16\sqrt{2I\ln(I/\delta)}}{\eps}\log(1/\beta')$. 

By the choice $\eta=\frac{\zeta}{4}$, we get $-\ln|\cX| \leq \Phi_{t+1}-\Phi_1\leq -\frac{(t+1)\zeta^2}{128}$. 
We conclude that under the established event (which happens with probability at least $1-2\beta'TI=1-\beta$) the algorithm must break its inner loop within $t\leq T$ steps. This concludes the proof of the claim. 
\end{proof}

\begin{proofof}{\Cref{thm:main}}
The privacy of \Cref{alg:dp-mwu} follows from the privacy of \Cref{alg:find-example} and the adaptive composition theorem (\Cref{prop:dp-composition}). 
First, the counts $\tilde n_i$ incur a cumulative privacy budget of $(\eps/2,\delta/2)$ by adaptive composition; and second, our definition of $(\eps',\delta')$ is such that the total privacy budget incurred by the composition of $IT$ applications of $\OutsideIntervalMonitor$ is  $(\eps/2,\delta/2)$. 

We proceed to the accuracy analysis of the algorithm. Throughout, we condition on event \eqref{eqn:Laplace_concentration_MWU}. 
We first observe that  the weight of the restricted histograms  decreases exponentially quickly, which gives a bound on the number of steps until the outer loop breaks. First, by definition of $\cX^i$:
\begin{align*} 
\|\bh^{\ast}|_{\cX^i}\|_1-\|\bh^{\ast}|_{\cX^{i+1}}\|_1 =\|\bh^{\ast}|_{\cS^i}\|_1
\geq \frac{\|\bh^{\ast}|_{\cX^i}\|_1}{6}-\frac53\alpha_0,
\end{align*}
where the last step follows from \Cref{lem:bad-margin} as follows:\footnote{In the first inequality that follows, we are implicitly assuming that the ``all-ones'' query, $f\equiv \mathds{1}\in\cF$. This is without loss of generality, as it only increases the query family by one element.}
\begin{align*}
\textstyle \|\bh^{\ast}|_{\cS^i}\|_1 
&\textstyle = \ind_{\cS^i}(\bh^{\ast})   
\geq \frac{1}{1+\zeta}\left( \ind_{\cS^i}(\hbh^{i}) -\alpha_0 \right)
\geq 
\frac{1}{1+\zeta}\left(\frac{ \|\hbh^{i}\|_1 }{3} -\alpha_0\right) 
 =  \frac{1}{1+\zeta}\left(\frac{ \tilde n_i }{3} -\alpha_0\right) \\
&\textstyle \geq  \frac{1}{1+\zeta}\left(\frac{ \|\bh^{\ast}|_{\cX^i}\|_1 }{3} -\frac{4\sqrt{2I\ln(I/\delta)}}{3\eps}\ln \frac{1}{\beta'}-\alpha_0\right) 
\geq \frac{\|\bh^{\ast}|_{\cX^i}\|_1}{6}-\frac{5}{3}\alpha_0.
\end{align*}
This implies that $n_i:=\|\bh^{\ast}|_{\cX^i}\|_1$ satisfies the recurrence $n_{i+1} \leq \frac56 n_i+\frac53\alpha_0$, thus
$n_I \leq \big(\frac56\big)^In+10\alpha_0.$ 
In particular, selecting $I=\frac{\ln n}{\ln 6/5}$, we have that $n_I\leq 11\alpha_0\leq \alpha/8$ (where the last inequality is by definition of $\alpha$). Note that under \eqref{eqn:Laplace_concentration_MWU}, $n_i\leq \alpha/8$ implies $\tilde n_i\leq \alpha/4$, hence the outer loop breaks.

Let $i^*\in[I]$ be the iteration where the outer loop breaks. We claim that the sum of the final histograms, $\sum_{i< i^{\ast}}\hbh^{i}|_{\cS^i}$, is $(\zeta,2\alpha/3,\cF)$-accurate w.r.t. $\bh^{\ast}|_{\cY}$, where $\cY=\bigcup_{i<i^{\ast}}\cS^i$.

By \Cref{lem:bad-margin},
if the algorithm never fails, we obtain at every $i<i^{\ast}$ a pair $\hbh^{i},\cS^i$ such that 
\[ (1-\zeta) \cdot f(\bh^{\ast}|_{\cS^i}) - \alpha_0 \leq   f(\hbh^{i}|_{\cS^i}) \leq (1+\zeta) \cdot  f(\bh^{\ast}|_{\cS^i}) +\alpha_0 \quad(\forall f\in \cF),\]
where we recall that $\alpha_0=\frac{4(1+\zeta)}{a}\ln \frac{|\cF|}{\beta'}$ from \Cref{alg:find-example}. Moreover, 
adding up these inequalities, and noting that $\cY=\bigcup_{i<i^{\ast}} \cS^i$ be the (disjoint) union of the supports of $(\hbh^{i})_{i=1,\ldots,I}$, we get
\[ (1-\zeta) \cdot f(\bh^{\ast}|_{\cY}) - I\alpha_0 \leq f(\hbh) \leq (1+\zeta) \cdot f(\bh^{\ast}|_\cY) +I\alpha_0 \quad(\forall f\in \cF).\]
Notice that $I\alpha_0\leq 2\alpha/3$, by definition of $\alpha$. This proves our claim.

Next, 
note that the zero histogram 
is $(0,\alpha/3,\cF)$-accurate w.r.t.~$\bh^{\ast}|_{\cX_{i^{\ast}}}$. This  follows from the event \eqref{eqn:Laplace_concentration_MWU} and $\tilde n_{i^{\ast}}\leq \alpha/4$, which implies $\ind_{\cX^{i^*}}(\bh^{\ast}) \leq \alpha/4+4\sqrt{2I\ln(I/\delta)}\ln(1/\beta')/\eps\leq \alpha/3$. This and the previous claim show that $\sum_{i<i^{\ast}} \hbh^i$ is $(\zeta,\alpha,\cF)$-accurate w.r.t.~$\bh^{\ast}$, concluding the proof.
\end{proofof}

\subsection{Pure-DP Algorithm}

\label{sec:pure_DP_UB}

We provide a pure-DP synthetic data generation algorithm with relative accuracy guarantees. Our algorithm is based on the $\PREM$ framework, where we modify the $\FindMarginExample$ subroutine by one that uses SVT with pure-DP composition (details in \Cref{sec:OIM_pure_DP}).

We point out that very similar (but slightly sharper) rates from the ones we derive in this section can be obtained by applying the exponential mechanism in a similar spirit to the additive-only mechanism of~\cite{BlumLR13}. However, the running time of that approach is quasi-polynomial (in $n, |\cX|, |\cF|$) whereas our approach results in a polynomial-time algorithm (details in \Cref{app:exponential_mechanism}).

As mentioned earlier, the first building block is an adaptation of $\FindMarginExample$ that satisfies pure-DP. This mechanism, \Cref{alg:find-example-pure-DP}, is presented and analyzed in \Cref{sec:PREM_pure_DP}.
\begin{lemma} \label{lem:privacy_accuracy_find_example_pure_DP}
\Cref{alg:find-example-pure-DP} is $\eps$-DP and 
with probability at least $1 - \beta$, satisfies:
    \begin{itemize}[topsep=3pt,itemsep=-3pt]
    \item $\ind_{\cS}(\hbh) \geq \frac{1}{3} \ind_{\cY}(\hbh)$.
    \item If $(\theta,\cS)$ is the output of the algorithm, (see the pseudocode for the value of $\alpha_0$):
        \begin{itemize}[topsep=3pt,itemsep=-3pt]
            \item If $\theta = \equal$, for all $f\in \cF$, $(1 - \zeta) \cdot f(\bh^{\ast}|_{\cS}) - \alpha_0 \leq  f(\hbh|_{\cS}) \leq (1 + \zeta) \cdot  f(\bh^{\ast}|_{\cS}) + \alpha_0$.
            \item If $\theta = \plus$, $\ind_{\cS}(\bh^{\ast}) \geq (1 + \zeta) \cdot \ind_{\cS}(\hbh)$.
            \item If $\theta = \minus$, $\ind_{\cS}(\bh^{\ast})  \leq (1 - \zeta/2) \cdot \ind_{\cS}(\hbh)$.
        \end{itemize}
    \end{itemize}
\end{lemma}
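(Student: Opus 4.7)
The plan is to mirror the proof of \Cref{lem:bad-margin} step by step, with the only substantive change being the replacement of the approximate-DP instance of $\OutsideIntervalMonitor$ (whose guarantees come from target charging and individual privacy accounting) with the pure-DP instance built on the classical sparse vector technique. All downstream arithmetic carries over verbatim, so the bulk of the work is bookkeeping around the new noise scale $a$ needed to make the pure-DP composition close at budget $\eps$.

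\textbf{Privacy.} First I would note that \Cref{alg:find-example-pure-DP} touches the private histogram $\bh^{\ast}$ only through calls to the pure-DP $\OutsideIntervalMonitor$. The algorithm issues at most $|\cF|^2$ such queries in total (the outer \textbf{repeat}--\textbf{until} loop runs at most $|\cF|$ times because $|\Factive|$ strictly decreases whenever the loop body does not terminate, and each pass through the inner \textbf{for} loop makes at most $|\cF|$ queries). The pure-DP SVT analysis cited in \Cref{sec:OIM_pure_DP} gives an $\eps$-DP guarantee once the noise parameter $a$ is calibrated to absorb the cost of the at-most-$|\cF|$ ``outside the interval'' answers that are actually charged (all ``$\Between$'' answers are free). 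Thus picking $a$ as specified in \Cref{alg:find-example-pure-DP} yields $\eps$-DP.

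\textbf{Structural bound on $\ind_{\cS}(\hbh)$.} This step is identical to the approximate-DP case: $\Splus$, $\Sminus$, and $\Sactive$ form a partition of $\cY$, so $\max\{\ind_{\Splus}(\hbh),\ind_{\Sminus}(\hbh),\ind_{\Sactive}(\hbh)\} \geq \tfrac{1}{3}\ind_{\cY}(\hbh)$, and $\cS$ is defined as the argmax. No randomness is involved here.

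\textbf{Accuracy.} I would invoke the pure-DP SVT accuracy guarantee to argue that, with probability at least $1-\beta$, every one of the at most $|\cF|^2$ responses returned by $\OIM$ is accurate up to some additive slack $C$, where $C$ depends on $a$, $|\cF|$, and $\beta$ (the pure-DP analogue of the $C$ in \Cref{prop:oim-approx-dp}, with the pure-DP rate). I would choose $\alpha_0$ to equal $2(1+\zeta)C$, matching the definition already fixed in \Cref{alg:find-example-pure-DP}. Conditioned on this event, the three case analyses are verbatim those of \Cref{lem:bad-margin}:
\begin{itemize}[topsep=3pt,itemsep=-3pt]
    \item If $\theta=\equal$, then for every $f \in \Factive$ the last $\OIM$ answer was $\Between$, giving $\tau_\ell - C \leq f(\bh^{\ast}|_{\Sactive}) \leq \tau_u + C$, which rearranges into the two-sided bound $(1-\zeta)f(\bh^{\ast}|_{\cS}) - \alpha_0 \leq f(\hbh|_{\cS}) \leq (1+\zeta)f(\bh^{\ast}|_{\cS}) + \alpha_0$; for $f \notin \Factive$ both sides vanish by construction of $\Sactive$.
    \item If $\theta=\plus$, each $\Above$ response certifies $\ind_{\Sactive \cap f^{-1}(1)}(\bh^{\ast}) \geq (1+\zeta)\ind_{\Sactive \cap f^{-1}(1)}(\hbh)$; summing over the disjoint increments that build up $\Splus$ gives $\ind_{\Splus}(\bh^{\ast}) \geq (1+\zeta)\ind_{\Splus}(\hbh)$.
    \item If $\theta=\minus$, the symmetric computation with each $\Below$ response yields $\ind_{\Sminus}(\bh^{\ast}) \leq (1-\zeta/2)\ind_{\Sminus}(\hbh)$.
\end{itemize}

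\textbf{Expected difficulty.} The proof itself is essentially a transcription, so the only genuine obstacle is matching the calibration of the noise level $a$ (hence of $C$ and $\alpha_0$) to whatever accuracy statement is established for the pure-DP $\OutsideIntervalMonitor$ in \Cref{sec:OIM_pure_DP}. The pure-DP SVT budget is consumed by every ``outside-the-interval'' event, so the cleanest route is to exploit the fact that at most $|\cF|$ such events can occur across the whole execution (since $|\Factive|$ decreases each time one does) and to set $a$ so that $|\cF|$-fold pure-DP composition totals $\eps$; this controls $C$ up to logarithmic factors in $|\cF|/\beta$ and matches the $\alpha_0$ hard-coded in the pseudocode. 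Once this calibration is in place, the rest of the argument is copy-paste from \Cref{lem:bad-margin}.
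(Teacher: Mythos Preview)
Your transcription of the three case analyses from \Cref{lem:bad-margin} is fine, but you miss the one genuinely new step in the pure-DP proof. The pure-DP $\OIM$ of \Cref{alg:outside-thresholds-pure-dp} carries a hard cap $\Gamma$ on the number of $\Above$/$\Below$ responses and \emph{halts} once that cap is reached; before one can invoke the \Cref{lem:bad-margin} argument, one must show that with probability $1-\beta$ this halt never occurs (this is also why even the ``deterministic'' bound $\ind_{\cS}(\hbh) \geq \tfrac{1}{3}\ind_{\cY}(\hbh)$ is placed under the probability $1-\beta$ in the lemma statement, unlike in \Cref{lem:bad-margin}). The paper handles this with a mass-decrease argument: conditioned on the $\OIM$ accuracy event, each $\Above$ response certifies $f(\bh^{\ast}|_{\Sactive}) \geq \tau_u - C \geq \alpha_0/4$, so $\|\bh^{\ast}|_{\Sactive}\|_1$ drops by at least $\alpha_0/4$; each $\Below$ similarly forces $f(\hbh|_{\Sactive}) \geq \alpha_0/2 - (1+\zeta)C \geq \alpha_0/4$, so $\|\hbh|_{\Sactive}\|_1$ drops by $\alpha_0/4$. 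Hence at most $O(n/\alpha_0)$ events of each type occur, and $\Gamma = 8n/\alpha_0$ is not exceeded. Only after this non-failure is established does the appeal to \Cref{lem:bad-margin} go through.

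This is also where your calibration goes astray. You suggest bounding the number of outside-interval events by $|\cF|$ and setting $a=\eps/|\cF|$, but the pseudocode sets $\Gamma = 8n/\alpha_0$ and $a = \eps/\Gamma = \eps\alpha_0/(8n)$. With this $a$ the slack is $C = O\big(\tfrac{n}{\eps\alpha_0}\log\tfrac{|\cF|}{\beta}\big)$, and the requirement $C \lesssim \alpha_0$ becomes $\alpha_0^2 \gtrsim \tfrac{n}{\eps}\log\tfrac{|\cF|}{\beta}$---precisely the $\sqrt{n/\eps}$ scaling of the hard-coded $\alpha_0$. Your proposed $a=\eps/|\cF|$ would instead give $C$ (hence $\alpha_0$) of order $(|\cF|/\eps)\log(|\cF|/\beta)$, which does \emph{not} match the pseudocode and would destroy the $\sqrt{n}$ rate of \Cref{thm:efficient_pure_DP_UP}. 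The mass-decrease bound justifying $\Gamma = 8n/\alpha_0$ is thus the substantive content of the lemma, not bookkeeping.
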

We now proceed to $\PREM$. For brevity, we only emphasize the main aspects of this algorithm that require adaptation for the pure-DP case. Namely,
we make the simple observation that the MWU analysis works analogously to the approximate-DP case, with the caveat that for Claim \ref{claim:failure_PREM} to hold, we need $\tilde n_i>\alpha/4$ implies $n_i>\alpha/6$ with sufficiently high probability. This property depends on Laplace concentration \eqref{eqn:Laplace_concentration_MWU}, where for pure-DP we require instead
$\alpha \geq  O\Big(\frac{I}{\eps}\ln \frac{1}{\beta'} \Big).$
The rest of the proof works analogously, and noting that the bound $\alpha \geq O\Big(I\alpha_0\Big)=\tilde O\Big( \sqrt{\frac{n \log^3 n}{\zeta^2\eps}\log|\cX|\log \frac{|\cF|}{\beta} } \Big),$ is required, we conclude the following result.
\begin{theorem} \label{thm:efficient_pure_DP_UP}
For all $\eps \leq 1$, $\beta \in (0, 1)$, and $\zeta \in (0, \nicefrac12)$, 
there is an $\eps$-DP $(\zeta,\alpha,\beta,\cF)$-accurate synthetic data generator for any domain $\cX$ and query family $\cF$ with
\[ \alpha =\tilde O\Big( \sqrt{\frac{n \log^3 n}{\zeta^2\eps}\log|\cX|\log \frac{|\cF|}{\beta}} 
+\frac{\log n}{\eps}\log \frac{1}{\beta} \Big). \]
\end{theorem}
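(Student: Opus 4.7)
The plan is to instantiate the $\PREM$ template of \Cref{alg:dp-mwu} with the $\delta=0$ branch, substituting the pure-DP variant of $\FindMarginExample$ (\Cref{alg:find-example-pure-DP}), whose privacy and accuracy guarantees are given by \Cref{lem:privacy_accuracy_find_example_pure_DP}. The overall architecture of the proof mirrors that of \Cref{thm:main}: privacy by basic composition, accuracy by the same outer-loop contraction over active sets $\cX^i$ combined with the inner KL-potential argument. The only true point of divergence is that the Laplace concentration on the noisy totals $\tilde n_i$ must be retuned for pure-DP composition.

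For privacy, I split the budget into two halves and apply basic composition (\Cref{prop:dp-composition}) to each. The $I$ Laplace counts $\tilde n_i$ with scale $1/a = I/\eps$ yield $(\eps/2,0)$-DP after $I$-fold composition, and the $IT$ calls to pure-DP $\FindMarginExample$, each $\eps'$-DP with $\eps'=\eps/(2IT)$, yield the remaining $(\eps/2,0)$-DP. Summing gives the overall $\eps$-DP guarantee.

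For accuracy, \Cref{lem:privacy_accuracy_find_example_pure_DP} produces the same three-valued output ($\equal$, $\plus$, $\minus$) with exactly the same margin bound \eqref{eqn:large_margin_convergence} and weight bound \eqref{eqn:large_weight_convergence} that drove \Cref{claim:failure_PREM}. Consequently the KL potential $\Phi_t = \mathrm{KL}(\bh^{\ast}/n \,\|\, \hbh^t/\tilde n_i)$ still drops by at least $\zeta\eta/16$ per inner step, forcing the inner loop to terminate within $T = O(\log|\cX|/\zeta^2)$ steps; and the outer contraction $n_{i+1}\leq \tfrac{5}{6}n_i + \tfrac{5}{3}\alpha_0$ still holds, forcing the outer loop to terminate within $I = O(\log n)$ rounds. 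Assembling the $I$ sub-histograms gives an output that is $(\zeta,\alpha,\cF)$-accurate with respect to $\bh^{\ast}$, provided $\alpha$ dominates both the residual mass $\leq \alpha/8$ left by the early-stopping criterion and the accumulated per-round error $I\alpha_0$.

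The main obstacle, and the only place the analysis substantively changes, is the Laplace concentration underwriting the implication ``$\tilde n_i > \alpha/4$ implies $n_i > \alpha/6$'' on which the per-step potential drop in \Cref{claim:failure_PREM} crucially relies. Under pure-DP composition the noise on each $\tilde n_i$ has scale $1/a = I/\eps$, so a union bound gives $|\tilde n_i - n_i| \leq O((I/\eps)\log(I/\beta'))$ with probability $1 - I\beta'$, replacing \eqref{eqn:Laplace_concentration_MWU}. This in turn forces $\alpha = \Omega((\log n/\eps)\log(1/\beta))$, producing the second term of the claimed bound. The dominant first term comes from the requirement $\alpha \geq O(I\alpha_0)$: substituting the per-round accuracy $\alpha_0$ of the pure-DP $\FindMarginExample$ at budget $\eps' = \eps/(2IT)$ (with $I = O(\log n)$ and $T = O(\log|\cX|/\zeta^2)$) and balancing against the constraints above yields $I\alpha_0 = \tilde O\big(\sqrt{n\log^3 n \cdot \log|\cX|\log(|\cF|/\beta)/(\zeta^2\eps)}\big)$. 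Taking $\alpha$ to be the maximum of these two quantities, exactly as set in the pseudocode, completes the proof.
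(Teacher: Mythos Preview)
Your proposal is correct and follows essentially the same argument as the paper: instantiate $\PREM$ with the pure-DP $\FindMarginExample$ (\Cref{lem:privacy_accuracy_find_example_pure_DP}), observe that the KL-potential analysis of \Cref{claim:failure_PREM} and the outer-loop contraction go through verbatim, and note that the only substantive change is the Laplace concentration on the counts $\tilde n_i$, now at scale $O(I/\eps)$ under basic composition, which produces the second term $\tfrac{\log n}{\eps}\log\tfrac{1}{\beta}$ while the first term arises from $\alpha \ge O(I\alpha_0)$.
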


\section{Lower Bounds}

\label{sec:lower-bounds}

\subsection{Lower Bounds for Approximate-DP Algorithms}

The following lower bound shows that our approach is nearly optimal, up to polylogarithmic factors in $n,1/\delta$, and $\log |\cX|$. Note that the lower bounds we provide are for the expected additive accuracy of relatively accurate algorithms (more details in \Cref{def:expected_sample_accuracy}). These lower bounds apply to our algorithms, given that we provide high probability upper bounds for them, and via tail integration they enjoy in-expectation guarantees. Finally,
note that relative approximations are only nontrivial (compared to purely additive ones) when $\zeta\geq \alpha/n$. The proof of this result is deferred to \Cref{sec:pf_approx_DP_LB}.
\begin{theorem}  \label{thm:approx_DP_LB}
Let ${\cX}$ be a finite set and $k\in \N$, such that  
$|{\cX}| = \omega(\log k)$ and $k = \omega(\log|{\cX}|)$. 
Let $\frac{1}{|\cX|}\leq \delta<\frac{1}{\log|\cX|\log k}$. 
Let $\frac{2^{-\ln^{1/9}|{\cX}|}}{10}\leq  \frac{\alpha}{n} \leq \zeta<\frac{1}{10}$. Then there exists $\cF\subseteq\{0,1\}^{\cX}$ with $|\cF|=k$, such that if
${\cal A}:{\cX}^n\mapsto\R_+^{\cF}$ is an $(\eps,\delta)$-DP query-answering algorithm that is $(\zeta,\alpha,\cF)$-sample-accurate in expectation, then
\begin{align} 
n = \Omega\Big(\frac{\ln|{\cF}|\sqrt{\ln|{\cX}| \ln(1/\delta)}}{\eps \zeta^2}\Big) \quad \mbox{ and }\quad  
\alpha = \Omega\Big(\frac{\ln|{\cF}|\sqrt{\ln|{\cX}| \ln(1/\delta)}}{\eps \zeta}\Big). \label{eqn:LB_approx_DP}
\end{align}
In the case $\delta=0$, the corresponding lower bounds are
\[ n = \Omega\Big( \frac{\log|{\cF}| \log|\cX|}{\eps \zeta^2} \Big) \quad\mbox{ and }\quad  \alpha = \Omega\Big( \frac{\log|{\cF}| \log|\cX|}{\eps \zeta} \Big). \]
\end{theorem}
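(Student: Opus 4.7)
My plan is to derive both the approximate-DP and pure-DP lower bounds by reducing relatively accurate query release to additively accurate query release, and then invoking appropriate additive-accuracy lower bounds for DP mechanisms.

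The first step is a simple padding reduction: any $(\zeta,\alpha,\cF)$-sample-accurate $(\eps,\delta)$-DP mechanism yields an $(\eps,\delta)$-DP mechanism that is additively $(\zeta m+\alpha)$-accurate on datasets of ``effective size'' $m\le n$. To this end, I design the hard query family $\cF$ so that at least one ``dummy'' element $x_0\in \cX$ satisfies $f(x_0)=0$ for every $f\in\cF$ (which is easy: simply take $\cF$ to live on a strict subdomain of $\cX$). Given any dataset $D_0\in(\cX\setminus\{x_0\})^m$, pad it to $D=D_0\cup\{x_0\}^{n-m}\in\cX^n$; then $f(D)=f(D_0)\le m$ for every $f\in\cF$, so the relative guarantee on $D$ implies $|\he_f-f(D_0)|\le \zeta m+\alpha$, an additive guarantee on $D_0$. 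Privacy is inherited by post-processing, and the in-expectation accuracy transfers through the reduction.

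Next, I would invoke an additive-accuracy lower bound of the form $\zeta m+\alpha \ge \Phi$, where $\Phi=\Omega\bigl(\log|\cF|\cdot\sqrt{\log|\cX|\log(1/\delta)}/\eps\bigr)$ for approximate-DP and $\Phi=\Omega\bigl(\log|\cF|\cdot\log|\cX|/\eps\bigr)$ for pure-DP. The parameter constraints in the hypothesis ($|\cX|=\omega(\log k)$, $k=\omega(\log|\cX|)$, and the upper bound on $\delta$) ensure that hard query families of the required cardinalities exist within the ranges where these additive bounds apply. Since $\Phi$ does not depend on $m$, it is optimal to take $m$ as small as the bound's regime of validity permits; setting $m=\Theta(\alpha/\zeta)$ (consistent with $\alpha/n\le\zeta$, so $m\le n$) reduces the left-hand side to $\Theta(\alpha)$, yielding $\alpha=\Omega(\Phi)$ and consequently $n\ge m=\Omega(\Phi/\zeta)$, matching both claims in the theorem.

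\emph{Main obstacle.} The central technical difficulty is establishing the additive bound $\Phi$ with a \emph{linear} factor of $\log|\cF|$: a single-instance fingerprinting-code lower bound (in the tradition of Bun--Ullman--Vadhan and Steinke--Ullman) yields only a $\sqrt{\log|\cF|}$ factor. To bridge this gap I expect to use a product-type construction that embeds $\Theta(\log|\cF|)$ nearly-independent hard sub-instances across disjoint portions of $\cX$, each contributing a fingerprinting-type lower bound of order $\sqrt{\log|\cX|\log(1/\delta)}/\eps$, and to combine them via a simultaneous tracing/reconstruction attack that exploits all sub-instances in parallel. For pure-DP an analogous product of packing constructions should suffice and is more straightforward, as a single-instance packing bound is already linear in the log-cardinality of the packing. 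Verifying that the final construction respects the parameter constraints (in particular the upper bound on $\delta$ in terms of $|\cX|$ and $k$) is where the bulk of the technical work will go.
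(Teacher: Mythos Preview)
Your approach is essentially the paper's: a padding reduction (the paper pads with ``dummy datapoints'' and sets the effective size $n'=\alpha/\zeta$, exactly your $m$) followed by an additive-error lower bound. The main difference is that the paper does not re-derive the additive bound with the linear $\log|\cF|$ factor; it invokes it as a black box (their Theorem~F.1, due to Lyu~2024), so your ``main obstacle'' is already settled in the literature and the product-of-fingerprinting-instances idea you sketch is precisely what that reference carries out. One technical step you omit: the cited lower bound is stated for \emph{population} accuracy, so the paper first converts sample accuracy to population accuracy via a standard Rademacher-complexity symmetrization (adding $2\sqrt{2\ln|\cF|/n}$ to the error, which is dominated by $O(\alpha/n)$ in the relevant regime). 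For $\delta=0$, the paper takes a shortcut you may find simpler than a separate packing construction: it just instantiates the approximate-DP bound at $\delta=1/|\cX|$.
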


Note that the pure-DP lower bound does not prove near-optimality of our upper bounds, yet it shows a polylog dependence on $|{\cF}|$, $|\cX|$, and an inverse polynomial dependence on $\eps,\zeta$. Further understanding on the optimal rates with respect to $n$ is an interesting question for future work, which seems related to current barriers arising in the purely additive case \citep{DPorg-open-problem-optimal-query-release}. 

\section{Conclusion and Discussion} \label{sec:discussion}

We explore the question of privately answering linear queries with both relative and additive errors. Perhaps surprisingly, we give an approximate-DP algorithm which, if a constant relative error is allowed, can achieve the additive error that is polylogarithmic in $n, |\cX|, |\cF|, 1/\delta$. In comparison, without any relative error, the additive errors have to be polynomial in one of these parameters. We also show a nearly-matching lower bound for a large regime of parameters.

Perhaps the most obvious open question from our work is the pure-DP case. Our algorithm's additive error still has a dependency of $\sqrt{n}$. Is it possible to, similar to approximate-DP, achieve an additive error that is poly-logarithmic in $n, |\cX|, |\cF|, 1/\delta$ (for any small constant relative error $\zeta$)? We remark that the main challenge in obtaining this through our PREM framework lies in the SVT algorithm. Namely, to discover many queries above/below thresholds with pure-DP, the privacy budget grows with the number of discovered queries. 
If such a task can be accomplished without the increased budget, then applying PREM would result in the desired error bound. We note that, in a recent work, \cite{GhaziKRMS24} devises such a procedure, but only for \emph{above} threshold queries; extending their technique to \emph{below} threshold queries might answer this question.

Another research direction is to consider specific set of queries, such as $k$-marginal or graph cut queries. In fact, this direction was undertaken already by aforementioned previous work, e.g.,~\cite{EpastoMMMVZ23,Ghazi:2023} who considered range queries and hierarchical queries, respectively. Focusing on a specific query family may allow for better error guarantees, and more efficient algorithms. For the latter, we note once again that our algorithm runs in time polynomial in $n, |\cF|, |\cX|$. 
One might hope for an algorithm that runs in time polynomial in $n, |\cF|, \log|\cX|$; however, under cryptographic assumptions, this is known to be impossible for general family $\cF$ (and even for $k$-marginal queries)~\citep{UllmanV20}. However, such an efficient algorithm may exist for other specific query families, and this remains an intriguing direction for future research.

\bibliographystyle{plainnat}
\bibliography{refs}

\newpage

\appendix

\crefalias{section}{appendix} 

\section{Concentration Bounds}

\begin{lemma}[\cite{Mulzer:2018}] \label{lem:Mulzer}
Let $x_1,\ldots,x_n$ independent Bernoulli r.v.s with mean $p$. Then, for any $t>2enp$
\[ \Pr\Big[\sum_{i=1}^nx_i > t \Big] \leq 2^{-t}. \]
\end{lemma}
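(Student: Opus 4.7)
The plan is to apply the standard Chernoff--Markov argument on the moment generating function of $X := \sum_{i=1}^n x_i$, and then specialize to the regime $t > 2enp$.

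First, for any $\lambda > 0$, I would write $\Pr[X > t] \le \Pr[e^{\lambda X} \ge e^{\lambda t}] \le e^{-\lambda t}\,\E[e^{\lambda X}]$ by Markov's inequality. Using independence and the elementary inequality $1 + p(e^\lambda - 1) \le e^{p(e^\lambda - 1)}$ for each Bernoulli factor, this gives
\[
\Pr[X > t] \;\le\; e^{\,np(e^\lambda - 1) - \lambda t}.
\]

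Second, I would optimize the exponent in $\lambda$. Differentiating yields the minimizer $\lambda^\star = \ln\!\bigl(t/(np)\bigr)$, which is positive exactly when $t > np$ (and in particular whenever $t > 2enp$). Substituting $\lambda^\star$ gives the classical upper-tail bound
\[
\Pr[X > t] \;\le\; e^{\,t - np}\,\Bigl(\tfrac{np}{t}\Bigr)^{t} \;\le\; \Bigl(\tfrac{enp}{t}\Bigr)^{t},
\]
where the second inequality drops the $e^{-np} \le 1$ factor.

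Finally, the assumption $t > 2enp$ gives $\frac{enp}{t} < \frac{1}{2}$, and hence $\Pr[X > t] < 2^{-t}$, proving the claim. The only part that needs any care is the optimization in the second step, but since it is a one-variable smooth problem with an explicit minimizer, there is no real obstacle; the entire argument is a few lines of calculation.
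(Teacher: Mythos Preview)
Your proof is correct and is the standard Chernoff--Markov argument: bound the moment generating function, optimize in $\lambda$, and then use the hypothesis $t>2enp$ to reduce $(enp/t)^t$ to $2^{-t}$. Each step checks out.

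As for comparison: the paper does not actually prove this lemma. It is stated in the appendix with a citation to \cite{Mulzer:2018} and no argument is given. Your sketch is exactly the kind of derivation one finds in that reference (the exponential-moment method is the most common of the several proofs surveyed there), so there is nothing to contrast.
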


\begin{lemma}[Multiplicative Chernoff Bounds] \label{lem:multiplicative_Chernoff}
Let $(X_i)_{i\in[n]}$ be $\{0,1\}$-valued independent random variables, and let $X:=\sum_{i=1}^n X_i$. Then for any $\delta>0$
\[ \Pr\big[X>(1+\zeta)\mathbb{E}[X] \big]\leq \Big( \frac{\exp(\zeta)}{(1+\zeta)^{(1+\zeta)}} \Big)^{\mu}. \]
And for $0<\zeta<1$,
\[ \Pr\big[X\leq (1-\zeta)\mathbb{E}[X] \big]\leq \Big( \frac{\exp(-\zeta)}{(1-\zeta)^{(1-\zeta)}} \Big)^{\mu}. \]
\end{lemma}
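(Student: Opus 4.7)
The approach is the standard Chernoff method (Bernstein trick): apply Markov's inequality to the moment generating function of $X$, exploit independence to factor the MGF into one-dimensional factors, bound each factor using a convexity inequality, and then optimize the free parameter $t$. I will treat the $X_i$ as independent Bernoullis with possibly different means $p_i$, and write $\mu := \mathbb{E}[X] = \sum_i p_i$.

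For the upper tail, I would first observe that for any $t > 0$,
\[ \Pr[X > (1+\zeta)\mu] = \Pr[e^{tX} > e^{t(1+\zeta)\mu}] \leq e^{-t(1+\zeta)\mu}\,\mathbb{E}[e^{tX}] = e^{-t(1+\zeta)\mu}\prod_{i=1}^{n}\mathbb{E}[e^{tX_i}], \]
by Markov and independence. For each Bernoulli factor, $\mathbb{E}[e^{tX_i}] = 1 + p_i(e^t - 1) \leq \exp(p_i(e^t-1))$, using $1+x \leq e^x$. Multiplying these bounds gives $\mathbb{E}[e^{tX}] \leq \exp(\mu(e^t - 1))$, so the tail is bounded by $\exp(\mu(e^t-1) - t(1+\zeta)\mu)$. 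The expression in the exponent is convex in $t$ and minimized at $t = \ln(1+\zeta) > 0$ (valid since $\zeta > 0$); substituting yields exactly $\bigl(e^\zeta/(1+\zeta)^{1+\zeta}\bigr)^\mu$, as claimed.

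The lower tail is symmetric. For $t > 0$ I would apply Markov to $e^{-tX}$:
\[ \Pr[X \leq (1-\zeta)\mu] = \Pr[e^{-tX} \geq e^{-t(1-\zeta)\mu}] \leq e^{t(1-\zeta)\mu}\,\mathbb{E}[e^{-tX}]. \]
Again $\mathbb{E}[e^{-tX_i}] = 1 + p_i(e^{-t} - 1) \leq \exp(p_i(e^{-t}-1))$, so the MGF product is at most $\exp(\mu(e^{-t}-1))$, giving the bound $\exp(\mu(e^{-t}-1) + t(1-\zeta)\mu)$. Optimizing, the minimizer is $t = -\ln(1-\zeta) > 0$ (valid since $0 < \zeta < 1$), and plugging in yields $\bigl(e^{-\zeta}/(1-\zeta)^{1-\zeta}\bigr)^\mu$, as claimed.

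No step here is a genuine obstacle; the only mildly delicate point is to verify that the optimizing $t$ is positive in each case (so that the Markov step is valid) and that $\ln(1-\zeta)$ is well-defined, both of which are guaranteed by the stated ranges $\zeta > 0$ and $\zeta \in (0,1)$ respectively. Since the $X_i$ are only assumed $\{0,1\}$-valued (not identically distributed), I should write the proof in terms of the individual $p_i$'s and only collapse $\sum_i p_i$ into $\mu$ at the end; this is the one spot where the inequality $1+x \leq e^x$ does the real work of decoupling the bound from the individual means.
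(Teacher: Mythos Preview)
Your proof is correct and is the standard Chernoff--MGF argument. The paper itself does not prove this lemma; it is stated without proof as a standard concentration bound in the appendix, so there is nothing to compare against.
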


\begin{proposition}[Concentration for Laplace random variables] \label{prop:Laplace_concentration}
Let $x\sim \Lap(b)$ (i.e.~have density $p(x)=\frac{1}{2b}e^{-|x|/b}$). Then, for all $0<\gamma<1$, it holds that
\[\textstyle \Pr\sq{|x|>b\ln\frac{1}{\gamma}} = \gamma.\]
\end{proposition}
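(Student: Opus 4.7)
The statement is a direct tail-probability calculation for the symmetric Laplace density, so the proof will be a short explicit integration rather than a clever argument. My plan is to (i) reduce to a one-sided tail using symmetry, (ii) evaluate the one-sided tail in closed form, and (iii) substitute the specified threshold.

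\textbf{Step 1: Symmetry reduction.} Since $p(x)=\frac{1}{2b}e^{-|x|/b}$ is symmetric about $0$, for any $t\geq 0$ I would write
\[
\Pr[|x|>t] \;=\; \Pr[x>t] + \Pr[x<-t] \;=\; 2\Pr[x>t].
\]

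\textbf{Step 2: Closed-form tail.} I would then compute the right tail directly:
\[
\Pr[x>t] \;=\; \int_t^{\infty} \frac{1}{2b}e^{-x/b}\,dx \;=\; \frac{1}{2}e^{-t/b}.
\]
Combining with Step 1 yields the clean identity $\Pr[|x|>t]=e^{-t/b}$ for all $t\geq 0$.

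\textbf{Step 3: Substitution.} Finally, plugging in $t = b\ln(1/\gamma)$ (which is nonnegative because $0<\gamma<1$) gives $\Pr[|x|>b\ln(1/\gamma)] = e^{-\ln(1/\gamma)} = \gamma$, as claimed. There is no real obstacle here; the only thing to be careful about is keeping track of the factor of $2$ from symmetry cancelling the factor of $1/2$ in the density, so the exponent in the final answer ends up as $-\ln(1/\gamma)$ and not $-\tfrac{1}{2}\ln(1/\gamma)$.
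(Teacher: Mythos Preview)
Your proof is correct and follows essentially the same approach as the paper: both use symmetry to reduce to a one-sided tail, integrate the density explicitly to obtain $\Pr[|x|>\tau]=e^{-\tau/b}$, and then substitute $\tau=b\ln(1/\gamma)$.
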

\begin{proof}
This follows from direct integration,
\[ \Pr[|x|>\tau] = 2 \Pr[x > \tau] = \frac{1}{b} \int_{\tau}^{+\infty} e^{-x/b}dx = \int_{\tau/b} e^{-y} dy = e^{-\tau/b}.\]    
\end{proof}

\newcommand{\bin}{\mathrm{bin}}
\newcommand{\rval}{\mathrm{real}}
\newcommand{\bha}{\bh^{\ast}}

\section{From Binary Queries to Real-Valued Queries}

\label{app:real-v-bin}

We refer to queries of the form $f: \cX \to \{0, 1\}$ as \emph{binary queries} and those of the form $f: \cX \to [0, 1]$ as \emph{real-valued queries}. (Sometimes these are referred to as \emph{counting queries} and \emph{linear queries}, respectively, in literature.) In this section, we prove the following theorem, which translates the accuracy guarantee for binary query families to real-valued query families in a black box manner with almost no additional overhead.

\begin{theorem}\label{thm:bin-to-real}
Suppose that for any binary query family $\cF_{\bin} \subseteq \{0, 1\}^{\cX}$, there exists a synthetic data generator $\cB_{\bin} : \cX^* \to \cZ_{\ge 0}^{\cX}$ that is $(\zeta, \alpha_{\bin}, \beta, \cF_{\bin})$-accurate, for some error bound $\alpha_{\bin} := \alpha_{\bin}\prn{\eps, \delta, n, |\cF_{\bin}|, |\cX|, \zeta, \beta}$. Then, for any $\zeta \leq 0.5$, there is a synthetic data generator $\cB_{\rval} : \cX^* \to \cZ_{\ge 0}^{\cX}$ for any real-valued query family $\cF_{\rval} \subseteq [0, 1]^{\cX}$ such that it is $(\zeta, \alpha_{\rval}, \beta, \cF_{\rval})$-accurate for
\[\textstyle
\alpha_{\rval} := O\left(1 + \alpha_{\bin}\left(\eps, \delta, n, O\left(\frac{|\cF_{\rval}| \cdot \log n}{\zeta}\right), |\cX|, \frac\zeta{10}, \beta\right)\right).
\]
\end{theorem}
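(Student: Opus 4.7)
The plan is to reduce the real-valued query family $\cF_{\rval}$ to a moderately larger binary query family $\cF_{\bin}$ via a geometric threshold discretization, then invoke $\cB_{\bin}$ as a black box and argue that the relative-error guarantee on the binary side transfers to the real-valued side with only a small blow-up in $\zeta$ and a constant additive overhead.

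For each $f\in\cF_{\rval}$, I would introduce the geometric thresholds $\tau_j := (1-\zeta/10)^j$ for $j=0,1,\ldots,L$ with $L = O(\tfrac{\log n}{\zeta})$ chosen so that $\tau_L \le \zeta/(10n)$, together with the threshold queries $g_j^f(x) := \mathds{1}[f(x)\ge \tau_j]$. Setting $\tilde f(x) := \tau_{i(x)}$ where $i(x)$ is the largest index with $f(x)\ge \tau_{i(x)}$ (and $\tilde f(x) := 0$ when $f(x) < \tau_L$), a direct layer-cake computation (with the convention $\tau_{L+1}=0$) yields the key identity
\[
\tilde f(\bh) \;=\; \sum_{j=0}^{L} (\tau_j - \tau_{j+1})\, g_j^f(\bh) \qquad\text{for every histogram } \bh,
\]
where the coefficients $\tau_j - \tau_{j+1}$ are nonnegative and sum to $\tau_0 - \tau_{L+1} = 1$. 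Moreover, by construction $0 \le f(\bh) - \tilde f(\bh) \le (\zeta/10)\, f(\bh) + (\zeta/10)$ for any histogram $\bh$ of mass at most $n$ (the first term accounts for rounding down each surviving $f(x)$ by at most a factor $1-\zeta/10$, the second for zeroing out values below $\tau_L$). The collection $\cF_{\bin} := \{g_j^f : f \in \cF_{\rval},\, 0\le j\le L\}$ has size $O(|\cF_{\rval}|\log n/\zeta)$, so I would run $\cB_{\bin}$ on $\cF_{\bin}$ with accuracy parameter $\zeta/10$ and set $\cB_{\rval}(\bh^*) := \cB_{\bin}(\bh^*)$.

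To translate the guarantee, I would fix $f\in\cF_{\rval}$ and condition on the event (of probability $\ge 1-\beta$) that every $g_j^f(\hat\bh) \in (1\pm \zeta/10)\, g_j^f(\bh^*) \pm \alpha_{\bin}'$ with $\alpha_{\bin}' := \alpha_{\bin}(\eps,\delta,n,|\cF_{\bin}|,|\cX|,\zeta/10,\beta)$. Multiplying by $\tau_j-\tau_{j+1}$ and summing — using crucially that these coefficients sum to $1$ — gives $\tilde f(\hat\bh) \in (1\pm\zeta/10)\,\tilde f(\bh^*) \pm \alpha_{\bin}'$. Finally I would chain the three bounds $|f(\bh^*)-\tilde f(\bh^*)| \le (\zeta/10)f(\bh^*)+\zeta/10$, the display above, and the analogous bound $|f(\hat\bh)-\tilde f(\hat\bh)| \le (\zeta/10)f(\hat\bh)+\zeta/10$, and solve for $f(\hat\bh)$; the $(1-\zeta/10)$-factor that appears on the $f(\hat\bh)$ side is absorbed by requiring $\zeta\le 1/2$, yielding $f(\hat\bh) \in (1\pm \zeta)f(\bh^*) \pm O(\alpha_{\bin}'+1)$, which is the claimed bound. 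The main delicacy is the triangle-inequality bookkeeping: one must choose the binary accuracy parameter strictly smaller than $\zeta$ (hence the $\zeta/10$) so that after composing the discretization error with the relative error from $\cB_{\bin}$, and after rearranging the self-referential bound $f(\hat\bh) \le \cdots + (\zeta/10)f(\hat\bh)$, the final multiplicative slack remains below $\zeta$; there is no deep combinatorial obstacle, only this careful constant-tracking.
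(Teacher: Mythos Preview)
Your approach is essentially the same as the paper's: geometric thresholding, the layer-cake identity $\tilde f=\sum_j(\tau_j-\tau_{j+1})g_j^f$, running $\cB_{\bin}$ at relative accuracy $\zeta/10$, and chaining the sandwich bounds. The constant-tracking you describe goes through. (Minor slip: with $\tau_j$ decreasing, $i(x)$ should be the \emph{smallest} index with $f(x)\ge\tau_{i(x)}$, not the largest; your telescoping identity is correct with that fix.)

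There is one genuine missing step. Your discretization bound $0\le f(\bh)-\tilde f(\bh)\le (\zeta/10)f(\bh)+\zeta/10$ is only proved for histograms of mass at most $n$: the additive $\zeta/10$ comes from $\tau_L\cdot\|\bh\|_1\le(\zeta/(10n))\cdot n$. You then invoke ``the analogous bound'' for $\hat\bh$, but nothing in the hypothesis controls $\|\hat\bh\|_1$; the black-box generator $\cB_{\bin}:\cX^*\to\Z_{\ge0}^{\cX}$ may output a histogram of arbitrary total mass, and then the tail term $\tau_L\|\hat\bh\|_1$ is unbounded. The paper handles this by also including the trivial threshold $\tau_{L+1}=0$, i.e.\ the all-ones query $f_{\ge 0}\equiv 1$, in $\cF_{\bin}$. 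The binary accuracy guarantee then gives $\|\hat\bh\|_1=f_{\ge 0}(\hat\bh)\le(1+\zeta')\,n+\alpha'$, after which $\tau_L\|\hat\bh\|_1\le O(1+\alpha')$ and your chaining goes through. Adding this one query costs nothing in the asymptotic size of $\cF_{\bin}$, but without it the argument on the synthetic side does not close.
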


\begin{proof}
In the argument below, we will translate each real-valued query to binary queries using ``thresholding'' construction. For this purpose, for every real-valued query $f: \cX \to [0, 1]$ and $\tau \in \R$, let $f_{\geq \tau}: \cX \to \{0, 1\}$ be defined by
\begin{align*}
f_{\geq \tau}(x) :=
\begin{cases}
1 & \text{ if } f(x) \geq \tau, \\
0 & \text{ otherwise,}
\end{cases}
\qquad \forall x \in \cX.
\end{align*}

Let $\cF_{\rval}$ be any family of $k$ real-valued queries. The algorithm $\cB_{\rval}$ works as follows:
\begin{itemize}
\item Let $\zeta' = 0.1\zeta, L := \lceil \log_{1+\zeta'} n \rceil + 1$ and, for all $i \in \{0, \dots, L\}$, let $\tau_i := \frac{1}{(1 + \zeta')^i}$. For convenience, also let $\tau_{L+1} = 0$.
\item Let $\cF_{\bin}$ be the set of binary queries defined by $\cF_{\bin} := \{f_{\geq \tau_i} \mid i \in \{0, \dots, L+1\}, f \in \cF_{\rval}\}$.
\item Run $\cB_{\bin}$ with the query family $\cF_{\bin}$ and relative approximation parameter $\zeta'$ (with the other parameters remaining the same) to produce a synthetic data $\hbh$.
\item Output $\hbh$.
\end{itemize}
Note that $|\cF_{\bin}| = |\cF_{\rval}| \cdot \left(L + 1\right) = O\left(\frac{k \log n}{\zeta}\right)$. Thus, $\cB_{\bin}$ is $(\zeta', \alpha', \beta, \cF_{\bin})$-accurate for $\alpha' = \alpha\left(\eps, \delta, n, O\left(\frac{k \log n}{\zeta}\right), |\cX|, \zeta', \beta\right)$. That is, with probability $1 - \beta$, we have that for all $f \in \cF$ and $i \in \{0, \ldots, L+1\}$,
\begin{align} \label{eq:acc-bin}
(1 - \zeta') \cdot f_{\geq \tau_i}(\bh^{\ast}) - \alpha' \leq  f_{\geq \tau_i}(\hbh) \leq (1 + \zeta') \cdot f_{\geq \tau_i}(\bh^{\ast}) + \alpha'.
\end{align}
We will assume that this event holds for the rest of the proof and analyze the accuracy of $\cB_{\rval}$. Consider any $f \in \cF$. Let $\tf: \cX \to \R$ be defined as $\tf(x) := \sum_{i=0}^L (\tau_i - \tau_{i + 1}) f_{\geq \tau_i}(x)$. We begin by proving the following lemma:
\begin{lemma}
For any $\bh \in \R_{\geq 0}^{\cX}$, we have 
\begin{align} \label{eq:real-to-bin-sandwich}
\tf(\bh) \leq  f(\bh) \leq (1 + \zeta') \tf(\bh) + \tau_L \cdot\|\bh\|_1.
\end{align}
\end{lemma}
\begin{proof}
For every $x \in \cX$, let $i(x) \in \{0, \dots, L+1\}$ denote the smallest index $i$ such that $f(x) \geq \tau_i$. We can bound $f(\bh)$ from below as follows:
\begin{align*}
f(\bh) &\textstyle= \sum_{x \in \cX} h(x) f(x) \\
&\textstyle\geq \sum_{x \in \cX} h(x) \tau_{i(x)} \\
&\textstyle= \sum_{x \in \cX} h(x) \sum_{i = i(x)}^{L} (\tau_{i(x)} - \tau_{i(x) + 1}) \\
&\textstyle= \sum_{i=0}^L (\tau_i - \tau_{i + 1}) \sum_{x \in \cX} h(x) \cdot \ind[i \geq i(x)] \\
&\textstyle= \sum_{i=0}^L (\tau_i - \tau_{i + 1}) \cdot f_{\geq \tau_i}(\bh) \\
&\textstyle= \tf(\bh).
\end{align*}
For the upper bound, note that our definition of $i(x)$ implies that $f(x) \leq (1 + \zeta') \tau_{i(x)} + \tau_L$. This implies that
\begin{align*}
f(\bh) &\textstyle~=~ \sum_{x \in \cX} h(x) f(x)
\textstyle~\leq~ \sum_{x \in \cX} h(x) \left((1 + \zeta') \tau_{i(x)} + \tau_L\right)
\textstyle~=~ (1 + \zeta') \tf(\bh) + \tau_L \cdot \|\bh\|_1.
\end{align*}
\end{proof}
We now continue with our accuracy analysis using the above lemma together with \eqref{eq:acc-bin}. We note here that $\|\bh\|_1 = \langle\bh, f_{\geq \tau_{L+1}}\rangle$. Note also that, since $\tf$ is a convex combination of $f_{\geq \tau_i}$'s, \eqref{eq:acc-bin} implies that
\begin{align} \label{eq:acc-lin-comb}
(1 - \zeta') \cdot \tf(\bh^{\ast}) - \alpha' \leq  \tf(\hbh) \leq (1 + \zeta') \cdot \tf(\bh^{\ast}) + \alpha'.
\end{align}
Also note that $\tau_{L} \cdot n \le 1$. With this ready, we can upper bound $f(\hbh)$ as follows:
\begin{align*}
f(\hbh) &\overset{\eqref{eq:real-to-bin-sandwich}}{\leq}  (1 + \zeta') \tf(\hbh) + \tau_L \cdot \|\hbh\|_1 \\
&\!\!\!\overset{\eqref{eq:acc-lin-comb}, \eqref{eq:acc-bin}}{\leq}  (1 + \zeta') \left((1 + \zeta') \cdot \tf(\bha) + \alpha'\right) + \tau_L \cdot \left((1 + \zeta') \|\bha\|_1 + \alpha'\right) \\
&\leq (1 + \zeta) \tf(\bha) + 3\alpha' + 1 \\
&\overset{\eqref{eq:real-to-bin-sandwich}}{\leq} (1 + \zeta) f(\bha) + 3\alpha' + 1.
\end{align*}
Finally, it can be similarly lower bounded as
\begin{align*}
f(\hbh) &\overset{\eqref{eq:real-to-bin-sandwich}}{\geq}  \tf(\hbh) \\
&\overset{\eqref{eq:acc-lin-comb}}{\geq} (1 - \zeta') \cdot \tf(\bh^{\ast}) - \alpha' \\
&\overset{\eqref{eq:real-to-bin-sandwich}}{\geq} (1 - \zeta') \cdot \left(\frac{1}{1 + \zeta'} \cdot (f(\bh^{\ast}) - \tau_L \cdot n) \right) - \alpha' \\
&\geq (1 - \zeta) f(\bh^{\ast}) - \alpha' - 1. 
\end{align*}
\end{proof} 
\section{Further Background on DP and the Target Charging Technique}

\subsection{Basic Mechanisms and Properties in DP}

\label{sec:basic_DP_properties}

A key property of DP is its closure under adaptive composition; namely for mechanisms $\cM_1$ and $\cM_2$, the adaptive composition refers to the mechanism that outputs $(a_1 = \cM_1(\bh), a_2 = \cM_2(a_1, \bh))$ (composition of $k$ mechanisms can be defined analogously).

\begin{proposition}\label{prop:dp-composition}
The adaptive composition of $k$ mechanisms, each satisfying $(\eps, \delta)$-DP, satisfies $(k\eps, k\delta)$-DP. Furthermore, for any $0<\delta'\leq 1$ it also satisfies $(\eps',k\delta+\delta')$-DP, where $\eps'=\eps[\sqrt{2k\ln(1/\delta')}+k\frac{e^{\eps}-1}{e^{\eps}+1}]$.
\end{proposition}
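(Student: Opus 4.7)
The plan is to handle the two bounds separately. For the basic $(k\eps, k\delta)$-DP bound, I would proceed by induction on $k$. The base case $k=1$ holds by assumption. For the inductive step, fix adjacent $\bh, \bh'$ and a measurable event $E \subseteq \cO_1 \times \cO_2$ in the joint output space of the first $k-1$ mechanisms composed with the $k$-th. Slice $E$ by the first coordinate: let $E_{a_1} := \{a_2 : (a_1, a_2) \in E\}$. Then
\[
\Pr[\cM(\bh) \in E] = \int \Pr[\cM_k(a_{1:k-1}, \bh) \in E_{a_{1:k-1}}] \, d\Pr[\cM_{1:k-1}(\bh) = a_{1:k-1}].
\]
Applying $(\eps, \delta)$-DP of $\cM_k(a_{1:k-1}, \cdot)$ pointwise and then the inductive $((k-1)\eps, (k-1)\delta)$-DP of the composition of the first $k-1$ mechanisms gives the claimed bound, after a short chain of inequalities that absorbs the additive slacks into $k\delta$.

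For the advanced composition bound, I would use the privacy loss random variable (PLRV) framework. For each step $j$, given the transcript $a_{1:j-1}$, define the conditional privacy loss
\[
L_j(a_j \mid a_{1:j-1}) := \ln \frac{\Pr[\cM_j(a_{1:j-1}, \bh) = a_j]}{\Pr[\cM_j(a_{1:j-1}, \bh') = a_j]}.
\]
The key technical lemma is that for a pure $(\eps, 0)$-DP mechanism, $|L_j| \le \eps$ pointwise and $\E[L_j] \le \eps \cdot \frac{e^\eps-1}{e^\eps+1}$, where the expectation is over $a_j \sim \cM_j(a_{1:j-1}, \bh)$; this follows from the elementary fact that a random variable supported on $[-\eps, \eps]$ with $\E[e^{L_j}] \le 1$ (reflecting the likelihood ratio identity) has expectation at most $\eps \cdot \tanh(\eps/2) = \eps \cdot \frac{e^\eps-1}{e^\eps+1}$. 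Under these bounds, $S_k := \sum_{j=1}^k L_j$ is a sum of bounded conditionally-centered-plus-drift martingale increments, so Azuma--Hoeffding gives
\[
\Pr[S_k > k\eps \tfrac{e^\eps-1}{e^\eps+1} + \eps\sqrt{2k \ln(1/\delta')}] \le \delta'.
\]
A standard calculation then converts the tail bound on $S_k$ into $(\eps', \delta')$-indistinguishability between $\cM(\bh)$ and $\cM(\bh')$.

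To extend from pure-DP to $(\eps, \delta)$-DP, I would invoke the standard ``cleaning up $\delta$'' reduction: for each step there exist events $F_j(\bh)$ and $F_j(\bh')$ of probability at least $1 - \delta$ such that, conditioned on their complements, the two distributions are $(\eps, 0)$-indistinguishable. Running the PLRV argument on the conditioned distributions, and then paying an additive $\delta$ per step (union bound over $j \in [k]$) to return to the original unconditioned mechanisms, yields the total failure probability $k\delta + \delta'$ in the final guarantee.

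The main obstacle will be the careful bookkeeping around the $\delta$-``bad events'' in the advanced composition step: one must define conditional distributions that agree with the originals up to total variation $\delta$, verify that the pure-DP PLRV bounds transfer to them, and then re-inflate the failure probability by $k\delta$ after composing. Everything else — the Azuma--Hoeffding concentration, the $\tanh$ mean bound on $L_j$, and the induction for basic composition — is mechanical once this reduction is set up cleanly.
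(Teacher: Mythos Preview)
The paper does not actually prove \Cref{prop:dp-composition}; it is stated in \Cref{sec:basic_DP_properties} as a standard background fact without proof (the text simply says ``We use the compositional properties of DP \ldots\ for completeness''). So there is nothing to compare against, and your proposal is essentially the textbook Dwork--Rothblum--Vadhan argument (with the Kairouz--Oh--Viswanath sharpening of the mean bound) that underlies this proposition.

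One small technical slip: in your PLRV step you invoke the constraint ``$\E[e^{L_j}] \le 1$'', but the likelihood-ratio identity actually gives $\E[e^{-L_j}] = 1$ (since $e^{-L_j} = q/p$ and the expectation is under $p$). With $\E[e^{L_j}] \le 1$ Jensen would give $\E[L_j] \le 0$, which is the wrong direction; the genuine upper bound $\E[L_j] \le \eps\,\tanh(\eps/2)$ comes from maximizing $\E[L_j]$ over $L_j \in [-\eps,\eps]$ subject to $\E[e^{-L_j}] = 1$, whose extremizer is the two-point distribution on $\{\pm\eps\}$. Fixing this sign, the rest of your plan (Azuma--Hoeffding on the centered increments, and the ``bad event of mass $\le \delta$'' coupling to reduce $(\eps,\delta)$ to $(\eps,0)$ per step at a total cost of $k\delta$) is the standard route and goes through.
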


The Laplace distribution with parameter $b$ is supported over $\R$ and has the probability density function $P_{\Lap(b)}(x) = \frac{1}{2b} e^{-|x|/b}$. The Laplace mechanism\footnote{
Since we only use the Laplace mechanism to estimate functions which take integer values, we could also use the discrete Laplace mechanism that is supported over $\Z$ with probability mass function $P_{\mathrm{DLap}(a)}(x) = \frac{e^a-1}{e^a+1} \cdot e^{-a|x|}$.
} is defined as follows.

\begin{proposition}[Laplace mechanism]\label{prop:laplace}
For any function $g: \Z_{\ge 0}^{\cX} \to \R$ such that $|g(\bh) - g(\bh')| \le 1$ for any adjacent $\bh$, $\bh'$, the mechanism that returns $g(\bh) + \Lap(1/\eps)$ satisfies $\eps$-DP.
\end{proposition}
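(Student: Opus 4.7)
The plan is to verify $\varepsilon$-DP directly from the definition by bounding the pointwise density ratio of the output distribution under two adjacent inputs, and then integrating that pointwise bound over an arbitrary measurable event. Since $\delta=0$ is claimed, I need the cleanest possible multiplicative density bound with no additive slack.

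First I would write out the density of $\cM(\bh) := g(\bh) + \Lap(1/\varepsilon)$ at a point $y\in\R$, which is just $\frac{\varepsilon}{2}\exp(-\varepsilon|y - g(\bh)|)$ by the definition of the Laplace density. For any adjacent $\bh,\bh'$, the ratio of densities at $y$ is
\[
\frac{p_{\cM(\bh)}(y)}{p_{\cM(\bh')}(y)} = \exp\bigl(\varepsilon\bigl(|y-g(\bh')| - |y-g(\bh)|\bigr)\bigr).
\]
The key inequality is the reverse triangle inequality $\bigl||y-g(\bh')|-|y-g(\bh)|\bigr|\le |g(\bh)-g(\bh')|$, combined with the sensitivity hypothesis $|g(\bh)-g(\bh')|\le 1$, which upper bounds the exponent by $\varepsilon$ and hence the ratio by $e^{\varepsilon}$.

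Finally I would integrate this pointwise bound against any measurable event $E\subseteq\R$: $\Pr[\cM(\bh)\in E] = \int_E p_{\cM(\bh)}(y)\,dy \le e^{\varepsilon}\int_E p_{\cM(\bh')}(y)\,dy = e^{\varepsilon}\Pr[\cM(\bh')\in E]$, which is precisely the $(\varepsilon,0)$-DP condition. There is no real obstacle here: the entire content is the reverse triangle inequality applied to the absolute values in the Laplace exponent, and the proof is essentially a two-line density-ratio computation. The only minor point worth being careful about is that the bound holds pointwise for every $y$, so measurability of $E$ and linearity of integration suffice and no extra probabilistic argument is needed.
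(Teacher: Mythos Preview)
Your proof is correct and is exactly the standard density-ratio argument for the Laplace mechanism. The paper itself does not give a proof of this proposition; it is stated in the preliminaries as a well-known fact, so there is nothing to compare against.
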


\subsection{$\OutsideIntervalMonitor$ with Pure DP}\label{sec:OIM_pure_DP}

We provide a pure DP version of the $\OutsideIntervalMonitor$. For this, we resort on the classical {\em sparse vector technique} for Above Threshold queries.

\begin{algorithm}[t]
\caption{$\OutsideIntervalMonitor^{\mathrm{pure}}_{\bh, a, \Gamma, \cY}$ (Pure-DP version)}
\label{alg:outside-thresholds-pure-dp}

\textbf{Initialization:}
{\footnotesize $\bullet$ } $\bh \in \Z_{\ge 0}^{\cX}$ : private histogram \qquad \qquad
{\footnotesize $\bullet$ } $a > 0$ : noise parameter \\ 
\phantom{\textbf{Initialization:}\,}
{\footnotesize $\bullet$ } $\Gamma > 0$ : maximum number of rounds when $\Above$ or $\Below$ is returned\\
\phantom{\textbf{Initialization:}\,}
{\footnotesize $\bullet$ } $\Sactive \gets \cY$ : initial active set (for $\cY \subseteq \cX$)\\
\phantom{\textbf{Initialization:}\,}
{\footnotesize $\bullet$ } $\counter \gets 0$ \qquad \qquad \qquad\qquad \qquad \qquad 
{\footnotesize $\bullet$ } $\eta \sim \Lap(2/a)$ : noise for threshold\\[-2mm]

\textbf{On input} $(f: \cX \to \{0, 1\}, \tau_{\ell},\tau_u\in\R_{\geq 0})$ {\bf :}\\[-2mm]

\If{{\em $\counter \geq  \Gamma$}}{
    \textbf{halt}
}

\If{$f(\bh|_{\Sactive}) + \Lap(4/a)\geq \tau_u+\eta$}{
    $\counter\gets \counter+1$\\
    $\eta\sim\Lap(2/a)$\\
    $\Sactive \gets \Sactive \smallsetminus f^{-1}(1)$\algcomment{Update state for future rounds before returning}\\
    \Return $\Above$\;
}
\ElseIf{$f(\bh|_{\Sactive}) + \Lap(4/a)\leq \tau_{\ell}-\eta$}{
    $\counter\gets \counter+1$\\
    $\eta\sim\Lap(2/a)$\\
    $\Sactive \gets \Sactive \smallsetminus f^{-1}(1)$\algcomment{Update state for future rounds before returning}\\
    \Return $\Below$\;
}
\Else{
    \Return $\Between$\;
}
\end{algorithm}

\begin{proposition}[$\OutsideIntervalMonitor$ (Pure-DP) guarantees]\label{prop:oim-pure-dp}
For any integer $R > 0$, the mechanism
$\OutsideIntervalMonitor^{\mathrm{pure}}_{\bh, a, \Gamma, \cY}$ (\Cref{alg:outside-thresholds-pure-dp}), after $R$ rounds of queries, satisfies $\eps$-DP for $\eps = a\Gamma$.
Let $\beta \in (0, 1)$, $C := \frac6a \log(\frac{R}{\beta})$, and $\Sactive$ be the state of the set before the query is made. Then with probability at least $1-\beta$, on any query $(f, \tau_{\ell}, \tau_{u})$,
\begin{itemize}[topsep=3pt,itemsep=-3pt]
    \item If $\Between$ is returned then $\tau_{\ell} - C \le f(\bh|_{\Sactive}) \le \tau_u + C$.
    \item If $\Above$ is returned then $f(\bh|_{\Sactive}) \ge \tau_u - C$.
    \item If $\Below$ is returned then $f(\bh|_{\Sactive}) \le \tau_{\ell} + C$.
\end{itemize}
\end{proposition}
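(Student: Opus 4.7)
The proof of Proposition B.2 has two essentially independent parts: a privacy analysis and an accuracy analysis.

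For privacy, I would view the execution as an adaptive composition of at most $\Gamma$ independent episodes of the sparse vector technique (SVT). An episode begins with the current threshold noise $\eta \sim \Lap(2/a)$ and ends as soon as a non-$\Between$ response is issued, at which point $\eta$ is resampled and a new episode begins. Within a single episode, the algorithm is running a two-sided above-threshold test on the (sensitivity-$1$) query $f(\bh|_{\Sactive})$, with upper and lower noisy thresholds $\tau_u + \eta$ and $\tau_\ell - \eta$. The cleanest way to bound the privacy cost of one episode is to view it as two independent one-sided SVT instances running in parallel---one testing whether $f$ exceeds $\tau_u$ and the other testing whether $-f$ exceeds $-\tau_\ell$---each with threshold noise $\Lap(2/a)$, query noise $\Lap(4/a)$, and cap $c=1$ on above-threshold releases. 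By the classical SVT analysis, each instance is $(a/2)$-DP through its one release, so basic composition yields $a$-DP per episode. Since the counter ensures there are at most $\Gamma$ episodes and the threshold noise is refreshed between them, adaptive composition (\Cref{prop:dp-composition}) yields the claimed $\eps = a\Gamma$ bound. A subtle point is that the pseudocode uses a single shared $\eta$ in both comparisons; this is justified because when $\tau_\ell \leq \tau_u$ the two noisy-threshold events are mutually exclusive, so the joint decision rule has the same output distribution as the parallel construction with independent threshold noises.

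For accuracy, the approach is a union bound over every Laplace sample used during the $R$ rounds. The algorithm draws at most $2R$ fresh $\Lap(4/a)$ query noises (up to two comparisons per round) and at most $\Gamma + 1 \leq R+1$ fresh $\Lap(2/a)$ threshold noises, for at most $3R+1$ samples in total. Applying \Cref{prop:Laplace_concentration} with failure probability $\beta/(3R+1)$ per sample and taking a union bound, with probability at least $1-\beta$ every query noise has magnitude at most $\tfrac{4}{a}\log\tfrac{3R+1}{\beta}$ and every threshold noise at most $\tfrac{2}{a}\log\tfrac{3R+1}{\beta}$; the sum of the two absorbs into $C = \tfrac{6}{a}\log\tfrac{R}{\beta}$ up to an adjustment of the leading constant. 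Conditioning on this good event, each of the three output cases is a one-line calculation via the triggering condition and the triangle inequality. For instance, if $\Above$ fires with query noise $\nu_u$, then $f(\bh|_{\Sactive}) \geq \tau_u + \eta - \nu_u \geq \tau_u - C$; the $\Below$ case is symmetric; and if $\Between$ is returned, both tests failed, sandwiching $f(\bh|_{\Sactive})$ in $[\tau_\ell - C,\, \tau_u + C]$.

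The main obstacle is the privacy side: correctly attributing cost only to the non-$\Between$ responses (rather than per query), and dealing with the shared threshold noise across the two comparisons. Once the two-sided test is decomposed into two parallel one-sided SVTs in the natural way, the remainder reduces to classical SVT analysis plus adaptive composition, and the accuracy half is routine concentration.
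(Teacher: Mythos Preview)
Your accuracy argument is fine and matches the paper's (brief) treatment: union bound over all Laplace draws and read off each case from the triggering inequality.

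The privacy argument, however, has a genuine gap. First, the claim that with a shared $\eta$ ``the two noisy-threshold events are mutually exclusive when $\tau_\ell \le \tau_u$'' is false: the events $\{f + \nu_1 \ge \tau_u + \eta\}$ and $\{f + \nu_2 \le \tau_\ell - \eta\}$ can hold simultaneously (take $\eta$ very negative, or $\nu_1$ large and $\nu_2$ small), so the shared-$\eta$ mechanism is \emph{not} distributionally identical to two parallel instances with independent threshold noises. Second, even granting your decomposition, a single one-sided SVT with threshold noise $\Lap(2/a)$ and query noise $\Lap(4/a)$ is $a$-DP by the standard analysis (these are exactly the Dwork--Roth scales with $\eps = a$), not $(a/2)$-DP; composing two of them would give $2a$ per episode and $2a\Gamma$ overall, missing the stated bound by a factor of $2$.

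The paper avoids both issues by \emph{not} splitting into two parallel SVTs. Instead it views each round as submitting two consecutive queries to a \emph{single} SVT instance with threshold $0$: the $\Above$ test is \textsc{AboveThreshold} for $g = f - \tau_u$, and the $\Below$ test is \textsc{AboveThreshold} for $g = \tau_\ell - f$. The shared $\eta$ is then exactly the usual SVT threshold noise, reused across queries until a hit. With cap $\Gamma$ on hits, the standard SVT analysis directly yields $(a\Gamma)$-DP. This single-instance reduction is the missing idea; once you adopt it, the privacy proof is one line.
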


\begin{proof}
The privacy of the algorithm follows by reduction from the sparse vector technique with \textsc{AboveThreshold} queries with a fixed threshold (e.g.~\cite{DworkR14}).
Namely, every $\Above$ query in \Cref{alg:outside-thresholds-pure-dp} corresponds to an \textsc{AboveThreshold} query with threshold 0 for the function
\[ g(x)=f(x)-\frac{\tau_u}{|\Sactive|},\]
whereas the $\Below$ query in the algorithm correspond to \textsc{AboveThreshold} query with threshold 0 for the function
\[ g(x)=\frac{\tau_{\ell}}{|\Sactive|}-f(x).\]
This reduction implies that the standard analysis of SVT with \textsc{AboveThreshold} queries yields the privacy guarantee of $(a\Gamma)$-DP.

The second part of the result follows from standard concentration of Laplace random variables with a union bound over the $R$ rounds (see \Cref{prop:Laplace_concentration}).
\end{proof}

\subsection{The Target Charging Technique}

\label{app:TCT}

We present here some of the technical background behind the target charging technique and its application to the $\OutsideIntervalMonitor$ with individual privacy accounting, \Cref{alg:outside-thresholds}.

We recall that two probability distributions $\mathbb{P},\mathbb{Q}$ are $(\eps,\delta)$-indistinguishable, denoted $A\stackrel{\eps,\delta}{\approx} B$ if for any event $E$, $\mathbb{P}[E]\leq e^{\eps}\mathbb{Q}[E]+\delta$. Hence, an algorithm $\cA$ is $(\eps,\delta)$-DP iff for all pair of neighboring datasets $D^0,D^1$, $\cA(D^0)\stackrel{\eps,\delta}{\approx}\cA(D^1)$.

\begin{definition}[\boldmath $q$-target with $(\eps,\delta)$]
Given $0< q\leq 1$ and $\eps\geq 0$, $0\leq \delta\leq 1$, 
let $Z^0,Z^1$ be probability distributions supported on a set $\cY$. We say that $\top$ is a $q$-target of $(Z^0,Z^1)$ with $(\eps,\delta)$ if there exists $p\in[0,1]$ and five probability distributions $C,B^b,E^b$ (where $b\in\{0,1\}$) such that $Z^0$ and $Z^1$ can be written as the mixtures
\begin{align*}
Z^0 &= (1-\delta)(p C+(1-p)B^0)+\delta E^0\\
Z^1 &= (1-\delta)(p C+(1-p)B^1)+\delta E^1,
\end{align*}
where $B^0$ and $B^1$ are $\eps$-indisginguishable, and $\min\{\mathbb{P}[B^0\in \top],\, \mathbb{P}[B^1\in \top]\} \geq q$.

Let $\cA:\cX^n \to \cY$ be a randomized algorithm. We say that $\top\subseteq \cY$ is a $q$-target of $\cA$ with $(\eps,\delta)$, if for any pair of neighboring datasets $D^0,D^1$, $\top$ is a $q$-target of $(\cA(D^0),\cA(D^1))$ with $(\eps,\delta)$.
\end{definition}

We illustrate the utility of the concept of $q$-targets with an example. 
For threshold-style mechanisms, outcomes are only useful when they reach a target set. Otherwise, it suffices to return a ``prior'' output $\perp$. 

\begin{lemma}[Lemma C.1 in \cite{Cohen:2023}]
\label{lem:target}
Let $\cA:\cX\mapsto\cY\cup\{\perp\}$, where $\perp\notin\cY$, be an $(\eps,\delta)$-DP algorithm. Then $\cY$ is a $\frac{1}{1+\exp(\eps)}$-target of $\cA$ with $(\eps,\delta)$.
\end{lemma}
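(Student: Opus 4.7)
My plan is to produce, for each pair of neighboring datasets, the mixture decomposition required by the $q$-target definition, handling the $\delta$ slack first and then giving an explicit decomposition of the resulting $\eps$-indistinguishable pair. Fix neighboring $D^0, D^1$, set $Z^b := \cA(D^b)$ and $q := 1/(1+e^\eps)$. First I would apply a standard ``$\delta$-cleanup'' for $(\eps,\delta)$-indistinguishable pairs (as used for instance in \cite{Cohen:2023}) to write $Z^b = (1-\delta)\tilde Z^b + \delta E^b$, where $\tilde Z^0, \tilde Z^1$ are probability distributions that are $\eps$-indistinguishable with no $\delta$ slack, and $E^0, E^1$ are probability distributions. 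The $E^b$'s serve directly as the $E^b$ terms in the $q$-target decomposition, so it remains to decompose $\tilde Z^b$ as $pC + (1-p)B^b$ for a common distribution $C$, with $B^0, B^1$ being $\eps$-indistinguishable, and with $\min_b B^b(\cY) \ge q$.

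For that second step I would take $C$ to be the point mass at $\perp$ and assume WLOG $\tilde Z^0(\cY) \le \tilde Z^1(\cY)$. If $\tilde Z^0(\cY) \ge q$ already, then $p = 0$ and $B^b = \tilde Z^b$ works; in the degenerate subcase $\tilde Z^0(\cY) = 0$, both $\tilde Z^b$ equal the point mass at $\perp$ by $\eps$-indistinguishability, and $p = 1$ together with any $B^b$ supported on $\cY$ suffices. In the remaining case $0 < \tilde Z^0(\cY) < q$, I would set $p := 1 - \tilde Z^0(\cY)(1+e^\eps) \in (0,1)$ and $B^b := (\tilde Z^b - p C)/(1-p)$. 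Elementary algebra then verifies: (i) $p \le \tilde Z^b(\perp)$ for both $b$, so each $B^b$ is a valid probability distribution; (ii) $B^0(\cY) = q$ exactly and $B^1(\cY) = q \cdot \tilde Z^1(\cY)/\tilde Z^0(\cY) \ge q$ by the WLOG ordering; (iii) for $A \subseteq \cY$, $B^0(A)/B^1(A) = \tilde Z^0(A)/\tilde Z^1(A) \le e^\eps$ since both sides are rescaled by the same $1/(1-p)$; and (iv) on the singleton $\{\perp\}$, the bound $B^1(\cY) \le e^\eps \tilde Z^0(\cY)/(1-p) = e^\eps q$ gives $B^1(\perp) \ge q$ while $B^0(\perp) = 1-q = e^\eps q$, yielding $B^0(\perp)/B^1(\perp) \le e^\eps$ and the reverse by symmetry.

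I expect the main obstacle to be the first step, the $\delta$-cleanup: rewriting an $(\eps,\delta)$-indistinguishable pair as a convex combination of an $(\eps,0)$-indistinguishable pair and a $\delta$-weighted remainder is standard but delicate, and in a full write-up I would either cite the corresponding lemma from \cite{Cohen:2023} or derive it from the hockey-stick divergence characterization of approximate-DP. Once that reduction is in hand, the explicit mixture above, with the single tunable parameter $p$, handles every remaining case by elementary ratio manipulations.
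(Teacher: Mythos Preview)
The paper does not prove this lemma; it is quoted verbatim from \cite{Cohen:2023} (as their Lemma~C.1) and used as a black box in the proof of \Cref{prop:oim-approx-dp}. So there is no in-paper argument to compare against.

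On its own merits, your proposal is a correct reconstruction. The two-step strategy---first the $\delta$-cleanup to reduce to a pure-$\eps$ pair $(\tilde Z^0,\tilde Z^1)$, then the one-parameter mixture with $C=\delta_\perp$ and $p=1-\tilde Z^0(\cY)(1+e^\eps)$---is exactly the natural line, and your case analysis checks out: the validity of $B^1$ uses $\tilde Z^1(\cY)\le e^\eps\tilde Z^0(\cY)\le(1+e^\eps)\tilde Z^0(\cY)$, the target-mass bounds give $B^0(\cY)=q$ and $B^1(\cY)\ge q$, and the $\eps$-indistinguishability on $\{\perp\}$ follows from $q\le B^1(\perp)\le B^0(\perp)=e^\eps q$. (Your phrase ``by symmetry'' in (iv) is a slight misnomer---the reverse inequality comes from $B^1(\perp)\le B^0(\perp)$, not a symmetric argument---but the bound is correct.) The $\delta$-cleanup you flag is indeed the only step that needs an external citation; it is the structural decomposition used in \cite{Cohen:2023}, so pointing there is appropriate.
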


The Target Charging Technique ensures that for adaptive sequences of private mechanisms with corresponding $q$-targets, one can release only the outcomes that reach the targets, paying for privacy accumulation only accounting for the steps where targets are reached.

\begin{theorem}[Simplification of Theorem B.4 in \cite{Cohen:2023}] \label{thm:TCT}
Let $(\cA_i,\top_i)_{i\geq 1}$ a sequence of pairs of an $\eps$-DP mechanism $\cA_i$ and set $\top_i$ that is a $q$-target of $\cA_i$ with $\eps$. Let $\tau\geq 1$.
Then, the transcript that only releases the computations where the target is reached until $\tau$ target hits occur satisfies $(\eps',\delta')$-DP with 
\[\eps'=\frac{2\tau}{q}\eps,\qquad \delta'\leq \exp(-\tau/4).\]
\end{theorem}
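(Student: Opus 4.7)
The plan is to construct a coupled sampling process for the two adjacent datasets $D^0, D^1$ so that most of the randomness is shared across worlds (and hence costs no privacy), while only a small, controlled number of steps incur the $\eps$-indistinguishable privacy cost. The key idea is that each released output is a target hit, and under the $q$-target decomposition the "costly" draws hit the target with probability at least $q$; so only about $\tau/q$ costly draws suffice to generate the full released transcript of $\tau$ hits.

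First, I would apply the $q$-target decomposition adaptively: conditional on the transcript released so far, write $\cA_i(D^b) = p_i C_i + (1-p_i) B^b_i$, where $p_i$ and the distributions $C_i, B^0_i, B^1_i$ depend measurably on the history, $B^0_i$ and $B^1_i$ are $\eps$-indistinguishable, and $\Pr[B^b_i \in \top_i] \geq q$. Then I would couple the two worlds at step $i$ by drawing a single Bernoulli$(1-p_i)$ indicator $U_i$: if $U_i = 0$, sample one value from $C_i$ and use it in both worlds; if $U_i = 1$, sample independently from $B^0_i$ in world $0$ and from $B^1_i$ in world $1$. The marginal laws of the two transcripts are preserved, but the $C$-part is now literally shared and contributes nothing to distinguishability.

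Next I would bound the number of costly draws. Let $N$ be the number of steps with $U_i = 1$ recorded in the released transcript up to the $\tau$-th target hit. Each such draw hits the target with probability $\geq q$ conditional on the past, so a standard multiplicative Chernoff bound (\Cref{lem:multiplicative_Chernoff}) applied to $2\tau/q$ iid Bernoulli$(q)$ trials shows that fewer than $\tau$ successes occur with probability at most $\exp(-\tau/4)$; equivalently, $\Pr[N > 2\tau/q] \leq \exp(-\tau/4)$. On the complementary event $E := \{N \leq 2\tau/q\}$, the released transcript is a postprocessing of at most $2\tau/q$ $\eps$-indistinguishable $B$-draws plus $C$-draws that are shared between the two worlds; by basic adaptive composition for pure DP the transcript restricted to $E$ is $(2\tau\eps/q)$-indistinguishable. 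Combining this with $\Pr[E^c] \leq \exp(-\tau/4)$ yields the claimed $(\eps', \delta')$-DP bound via the standard conversion from "high-probability indistinguishability on $E$" to approximate DP.

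The main obstacle I expect is making the coupling rigorous in the adaptive setting: the decomposition at step $i$ depends measurably on the released history, so one has to verify that the $C_i$-part can be genuinely shared between the worlds (i.e., does not secretly depend on $b$ through the choice of $p_i, C_i$), and that basic composition can be invoked on just the $B$-draws despite the intervening shared $C$-draws. A hybrid argument that swaps one $B^0_i$-draw into a $B^1_i$-draw at a time, treating the $C_i$-draws as fixed shared randomness and the $U_i$'s as a common source of coin flips, should handle this in the usual way that adaptive composition proofs do; the only extra ingredient is the Chernoff tail that controls the stopping time $N$.
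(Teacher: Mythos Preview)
The paper does not supply its own proof of this statement; it is quoted as a simplification of Theorem~B.4 in \cite{Cohen:2023} and used as a black box in the proof of \Cref{prop:oim-approx-dp}. So there is nothing in the paper to compare your argument against directly. That said, your sketch is essentially the idea behind the target-charging proof: factor each step via the $q$-target mixture so that the $C$-component is common to both worlds and only the $B$-draws carry privacy cost; since each $B$-draw lands in its target with conditional probability at least $q$, a Chernoff/negative-binomial tail bounds the number of $B$-draws before $\tau$ target hits by $2\tau/q$ except with probability $e^{-\tau/4}$; basic composition over those draws then gives $\eps'=2\tau\eps/q$.

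Two small remarks. First, the quantity you must bound is the total number of $B$-draws that \emph{occur} before termination, not only those ``recorded in the released transcript'': an unreleased $B$-draw still affects the transcript through the observable fact that no hit was recorded at that step, so it counts toward composition. Second, the adaptivity concern you flag is the real technical point. Your hybrid-swap outline is in fact workable: in hybrid $H_j$ (first $j$ $B$-draws from $B^1$, the rest from $B^0$), the decomposition at each step is determined by the released history of the hybrid itself, so it is well defined; $H_j$ and $H_{j+1}$ differ only in the distribution used at a single step, with everything downstream a randomized postprocessing of that draw using randomness independent of $b$, which yields $H_j \stackrel{\eps}{\approx} H_{j+1}$. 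An alternative that avoids hybrids altogether is to bound the privacy-loss random variable: sample the process under $D^0$, note that the conditional log-likelihood-ratio increment is $0$ on $C$-steps and at most $\eps$ in absolute value on $B$-steps, and conclude that on the good event the cumulative loss is at most $2\tau\eps/q$.
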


We conclude with the privacy analysis of $\OutsideIntervalMonitor$, which closely follows the one  for $\SVT$ in \cite{Cohen:2023}. In principle, for individual privacy accounting we would only need $\tau=1$ in the target hit counter; however, to get the desired privacy parameters from the previous result, we will choose $\tau=\Omega(\log(1/\delta))$.

\begin{proofof}{\Cref{prop:oim-approx-dp}}
For simplicity, we prove for add/remove neighboring datasets a DP guarantee. To turn this into a replace-one neighboring condition it suffices to compose the previous argument twice. 

Consider datasets $D$, $D'=D\circ\{x\}$. 
We perform a simulation-based analysis \citep{CohenSTOC:23,Cohen:2023}. Note first that the updates on the active set can be obtained by post-processing (only based on the query and the output), and therefore it suffices to focus on the privacy of the returned output. 
If any query $f$ is such that $f(x)=0$, the answers on both datasets are identically distributed, and no interaction with the data holder is needed. By contrast, if $f(x)=1$, the simulator can run the Laplace mechanism and provide its output to the simulator. 
Furthermore, upon an $\Above$ or $\Below$ output in this latter case, $x$ is removed from the active set, and then the outcome distributions are identical for any further query, therefore no further access to the data holder is needed. 

To conclude, by \Cref{lem:target}, $\Between$ is a $q$-target of the Laplace mechanism with $a$, where $q=1/[1+\exp(a)]=\Omega(1)$ (since $0\leq a\leq 1$). By \Cref{thm:TCT}, the entire transcript is $(\eps,\delta)$-DP for $\eps=O(a\log(1/\delta))$.
\end{proofof}

\section{Pure DP Upper Bounds by the Exponential Mechanism}

\label{app:exponential_mechanism}

We show in this section that the exponential mechanism for a suitable score function and histogram support size provides rates which are slightly tighter than those encountered by $\PREM$ (by a polylog$n$ factor), at the expense of running time that is quasi-polynomial in $n,|\cX|,|\cF|$. 

\begin{theorem} \label{thm:Pure_DP_Upper_Bound}
There exists an $\eps$-DP mechanism such that it computes an $(\zeta,\alpha,\beta)$-accurate histogram, where
\[ \alpha \leq 2\sqrt{\frac{16(1+\zeta)e n}{\eps \zeta^2}\ln(4|{\cal F}|)\ln|{\cal X}|}+\frac{2(1+\zeta)}{\eps}\ln\frac{1}{\beta}. \]
\end{theorem}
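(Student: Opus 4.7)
The plan is to mirror the Blum--Ligett--Roth synthetic data argument, but with a \emph{multiplicative} concentration inequality in place of the additive Hoeffding step, and to apply the exponential mechanism over a carefully chosen small class of low-support candidate histograms. The quasi-polynomial running time comes precisely from the size of that class; since only the existence of the mechanism is claimed, this is acceptable.

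\textbf{Step 1 (candidate class).} Fix a parameter $s$ to be optimized later, and let $\cH_s$ consist of all histograms of the form $\bh' = (n/s)\cdot \bg$, where $\bg \in \Z_{\geq 0}^{\cX}$ is the histogram of a size-$s$ multiset over $\cX$. Then $|\cH_s| \leq |\cX|^s$, so $\ln|\cH_s| \leq s\ln|\cX|$.

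\textbf{Step 2 (existence of a good candidate).} Draw $s$ records i.i.d.~from the empirical distribution $\bh^{\ast}/n$ and let $\bh'$ be the resulting scaled histogram in $\cH_s$. For each $f\in\cF$, the raw count $Y_f := \sum_{i=1}^s f(x_i)$ is a sum of independent Bernoullis with mean $\mu_f = s f(\bh^{\ast})/n$, so $f(\bh') = (n/s) Y_f$. Split into two regimes: when $\mu_f$ is large, the multiplicative Chernoff bound (\Cref{lem:multiplicative_Chernoff}) yields $|f(\bh')-f(\bh^{\ast})| \leq \zeta f(\bh^{\ast})$ with probability at least $1-\beta_1/|\cF|$; when $\mu_f$ is small, Mulzer's $2^{-t}$ bound (\Cref{lem:Mulzer}) gives $Y_f \leq \log_2(|\cF|/\beta_1)$ with the same probability, which after rescaling by $n/s$ is absorbed into a small additive slack and the trivial lower bound $f(\bh')\ge 0 \geq (1-\zeta) f(\bh^{\ast})-\alpha_1$. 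A union bound over $\cF$ shows that with positive probability there exists some $\bh'\in \cH_s$ that is $(\zeta,\alpha_1,\cF)$-accurate with
\[
\alpha_1 \;=\; \tfrac{8en\ln(4|\cF|)}{\zeta^2 s}.
\]

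\textbf{Step 3 (exponential mechanism).} Define the score
\[
q(\bh^{\ast},\bh') \;:=\; -\max_{f\in\cF}\max\bigl\{f(\bh')-(1+\zeta)f(\bh^{\ast}),\;(1-\zeta)f(\bh^{\ast})-f(\bh')\bigr\},
\]
so that $q(\bh^{\ast},\bh')\ge -\alpha$ certifies $(\zeta,\alpha,\cF)$-accuracy. Changing one record in the dataset changes each $f(\bh^{\ast})$ by at most $1$, hence the global sensitivity of $q$ is at most $1+\zeta$. Sampling $\bh' \in \cH_s$ with probability $\propto \exp\bigl(\tfrac{\eps\,q(\bh^{\ast},\bh')}{2(1+\zeta)}\bigr)$ is therefore $\eps$-DP. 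The standard exponential-mechanism utility bound, combined with the existence result from Step~2, gives that with probability at least $1-\beta$,
\[
q(\bh^{\ast},\bh') \;\ge\; -\alpha_1 \;-\; \tfrac{2(1+\zeta)}{\eps}\bigl(\ln|\cH_s|+\ln\tfrac{1}{\beta}\bigr)
\;\ge\; -\tfrac{8en\ln(4|\cF|)}{\zeta^2 s} \;-\; \tfrac{2(1+\zeta)}{\eps}\bigl(s\ln|\cX|+\ln\tfrac{1}{\beta}\bigr).
\]

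\textbf{Step 4 (balancing $s$).} The two $s$-dependent summands are equalized when $s^2 = \tfrac{4en\ln(4|\cF|)\eps}{(1+\zeta)\zeta^2 \ln|\cX|}$; substituting back and collecting $\frac{2(1+\zeta)\ln(1/\beta)}{\eps}$ separately yields the theorem's bound $2\sqrt{\tfrac{16(1+\zeta)en\ln(4|\cF|)\ln|\cX|}{\eps\zeta^2}}+\tfrac{2(1+\zeta)}{\eps}\ln\tfrac{1}{\beta}$.

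The main obstacle is Step~2: a uniform relative guarantee over $\cF$ must not pay polynomially in $n$, and a single Chernoff-style tail is insufficient because queries with $f(\bh^{\ast}) = O(1)$ have $\mu_f \ll 1$ and enjoy no multiplicative concentration. The two-regime split -- multiplicative Chernoff for large $\mu_f$, Mulzer's exponential $2^{-t}$ tail for small $\mu_f$ -- is what keeps $\alpha_1$ at order $n\ln|\cF|/(\zeta^2 s)$, and thus what allows the final $\sqrt{n}$-type bound after balancing $s$ in Step~4.
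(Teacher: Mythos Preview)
Your proposal is correct and follows essentially the same route as the paper's proof: both establish an existence lemma for a sparse $(\zeta,\alpha_1)$-accurate histogram via i.i.d.\ sampling with the two-regime analysis (multiplicative Chernoff for large $\mu_f$, Mulzer's $2^{-t}$ tail for small $\mu_f$), then run the exponential mechanism over the resulting small candidate class with the same relative-error score function and sensitivity $1+\zeta$, and finally balance the sparsity parameter. The only cosmetic difference is that you parametrize directly by the support size $s$ and optimize at the end, whereas the paper parametrizes by the target additive error $\alpha'$ (with $s=\tfrac{8en}{\zeta^2\alpha'}\ln(4|\cF|)$) and optimizes over $\alpha'$; the resulting choices and the final bound coincide.
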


Instrumental to the analysis of our algorithm is the existence of sparse datasets which relatively approximate arbitrary histograms. This is a multiplicative/additive counterpart of an additive version proved in \citep{BlumLR13}.

\begin{proposition} \label{prop:existence_approx_histogram}
Let $0<\zeta<1$ and $\alpha'\in(0,n]$ and ${\cal F}$ a set of linear queries. Then, for any histogram $\bh^{\ast}\in\R_+^{\cX}$ with $\|\bh^{\ast}\|_1=n$, there exists a (rescaled) histogram $\tbh\in\N_+^{\cX}$ with $\|\tbh\|_1=n$ and support size 
$k=\frac{8e}{\zeta^2}\frac{n}{\alpha'}\ln(4|{\cal F}|)$ such that for all $f\in {\cal F}$
\begin{equation} \label{eqn:approx_multip}
(1-\zeta)f(\bh^{\ast}) - \alpha' \leq f(\tbh) \leq (1+\zeta) f(\bh^{\ast})+\alpha'.
\end{equation}
\end{proposition}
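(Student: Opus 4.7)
The plan is a standard probabilistic existence argument via i.i.d.\ subsampling. Define the probability distribution $p$ on $\cX$ by $p(x) = \bh^*(x)/n$, draw i.i.d.\ samples $x_1, \ldots, x_k \sim p$, and set $\tbh := (n/k) \sum_{j=1}^k e_{x_j}$, the rescaled empirical histogram. By construction $\|\tbh\|_1 = n$ and the support size is at most $k$, so it suffices to show that with positive probability the bound \eqref{eqn:approx_multip} holds simultaneously for every $f \in \cF$. For each $f$, the indicators $f(x_j)$ are i.i.d.\ Bernoulli with mean $p_f := f(\bh^*)/n$, and $f(\tbh) = (n/k)\sum_j f(x_j)$ has mean $f(\bh^*)$; writing $\mu_f := kp_f$, the claim reduces to showing $|\sum_j f(x_j) - \mu_f| \le \zeta \mu_f + k\alpha'/n$ with high probability.

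I would split the analysis by the magnitude of $f(\bh^*)$. In the \emph{dense} case $f(\bh^*) \ge \alpha'/(2e)$, the mean $\mu_f \ge k\alpha'/(2en) = 4\ln(4|\cF|)/\zeta^2$ is large, so the multiplicative Chernoff bound (\Cref{lem:multiplicative_Chernoff}) on both tails yields a failure probability of at most $2\exp(-\zeta^2 \mu_f/3) \le 2(4|\cF|)^{-4/3}$. In the \emph{sparse} case $f(\bh^*) < \alpha'/(2e)$, the lower bound $(1-\zeta)f(\bh^*) - \alpha' \le f(\bh^*) - \alpha' < 0 \le f(\tbh)$ is automatic, and for the upper bound it suffices to prove $\sum_j f(x_j) \le k\alpha'/n$; since $k\alpha'/n > 2e\mu_f$ by the case assumption, \Cref{lem:Mulzer} applies with threshold $t = k\alpha'/n$, giving failure probability at most $2^{-k\alpha'/n} = 2^{-8e\ln(4|\cF|)/\zeta^2}$.

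A union bound over all $f \in \cF$ (and both tails) shows the total failure probability is strictly less than~$1$ for the stated choice of $k$, so a good realization exists and furnishes the desired $\tbh$. The main subtlety is the case split: multiplicative Chernoff alone is insufficient for queries with very small $f(\bh^*)$, because $\mu_f$ shrinks below the $\log |\cF|$ threshold needed for the Chernoff tail to beat the union bound; \Cref{lem:Mulzer} is precisely tailored to this regime, yielding an exponential decay once the deviation threshold exceeds $2e$ times the mean, which is exactly what allows us to absorb the additive slack $\alpha'$ rather than chase a vanishing fraction of a vanishing mean.
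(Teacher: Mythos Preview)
Your argument is correct and essentially identical to the paper's: both draw $k$ i.i.d.\ samples from $\bh^{\ast}/n$, rescale, and split the tail analysis at the threshold $f(\bh^{\ast})\gtrless\alpha'/(2e)$, using \Cref{lem:Mulzer} for the sparse upper tail and \Cref{lem:multiplicative_Chernoff} for the dense tails (with the sparse lower tail being vacuous). The only cosmetic difference is that the paper organizes the case split as ``upper tail vs.\ lower tail'' first and then by magnitude, whereas you do ``dense vs.\ sparse'' first; the underlying estimates and constants match.
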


\begin{proof} Let $0<\beta<1$ to be determined. Let $\tbh\in\mathbb{N}^{|\cX|}$ obtained as the histogram obtained from $k$ samples with replacement from the probability distribution $\bh^{\ast}/n\in \Delta_{\cX}$, and repeating each sample $n/k$ times. Denote those samples as $x_1,\ldots,x_k$ and note that $f(\tbh)=\frac{n}{k}\sum_{i=1}^k f(x_i)$, hence $f(\tbh)$ is a (scaled) sum of independent Bernoulli, and furthermore $\mathbb{E}[f(\tbh)]=n\mathbb{E}[f(x_1)]=f(\bh^{\ast})$.

We divide the analysis into bounding the probabilities for both the upper and lower multiplicative/additive  bounds.

\noindent\textbf{Upper bound.} We analyze the deviations by cases. First, if $2ef(\bh^{\ast}) < \alpha'$, then by \Cref{lem:Mulzer},
\[
\mathbb{P}[ f(\tbh) >(1+\zeta)f(\bh^{\ast})+\alpha'] \leq \mathbb{P}[ f(\tbh) >\alpha'] 
= \mathbb{P}\Big[ \sum_{i=1}^k f(x_i) >\frac{k\alpha'}{n} \Big] 
\leq 2^{-\frac{k\alpha'}{n}} \leq \frac{\beta}{2|{\cal F}|}.
\]
Hence, we can choose $k = \frac{n}{\alpha'}\log_2\big(\frac{2|{\cal F}|}{\beta}\big)$ to make the probability as small as claimed.

In the other case, where $2ef(\bh^{\ast}) \geq \alpha'$, we have by the multiplicative Chernoff bound (\Cref{lem:multiplicative_Chernoff}),
\[
\mathbb{P}[f(\tbh) >(1+\zeta)f(\bh^{\ast})+\alpha'] = 
\mathbb{P}\Big[\sum_{i=1}^k f(x_i) >(1+\zeta)k\frac{f(\bh^{\ast})}{n}+\frac{k\alpha'}{n}\Big] \\
\leq \Big(\frac{\exp(\zeta)}{(1+\zeta)^{(1+\zeta)}}\Big)^{\frac{kf(\bh^{\ast})}{n}}\leq \frac{\beta}{2|{\cal F}|}.
\]
This leads to a bound on $k=\frac{4n}{\zeta^2f(\bh^{\ast})}\ln\big(\frac{2|{\cal F}|}{\beta}\big)\leq \frac{8en}{\zeta^2 \alpha'}\ln\big(\frac{2|{\cal F}|}{\beta}\big)$.\\

\noindent\textbf{Lower Bound.} Let us first consider the case when $\alpha'>f(\bh^{\ast})$. We note that in this case $(1-\zeta)f(\bh^{\ast})-\alpha'<0,$ 
hence $\mathbb{P}[f(\tbh) < (1-\zeta)f(\bh^{\ast})-\alpha']=0$.

For the other case, $\alpha'\leq f(\bh^{\ast})$, using the lower-tail  multiplicative Chernoff bound (\Cref{lem:multiplicative_Chernoff}),
\begin{align*}
\mathbb{P}[f(\tbh) < (1-\zeta)f(\bh^{\ast})-\alpha'] &= \mathbb{P}\Big[\sum_{i=1}^k f(x_i) <(1-\zeta)k\frac{f(\bh^{\ast})}{n}-\frac{k\alpha'}{n}\Big] 
\leq \Big(\frac{\exp(-\zeta)}{(1-\zeta)^{(1-\zeta)}}\Big)^{\frac{kf(\bh^{\ast})}{n}} \\
&\leq \frac{\beta}{2|{\cal F}|},
\end{align*}
where the last step follows for $k=\frac{2}{\zeta^2}\frac{n}{f(\bh^{\ast})}\ln\big(\frac{2|{\cal F}|}{\beta}\big)\leq \frac{2}{\zeta^2}\frac{n}{\alpha'}\ln\big(\frac{2|{\cal F}|}{\beta}\big)$. 

Finally, using a union bound over the $2|{\cal F}|$ events needed for \eqref{eqn:approx_multip}, we have that with probability $1-\beta$ this approximation holds. Letting $\beta=1/2$ yields the result.
\end{proof}

\subsubsection{Exponential Mechanism for Relative Approximation}

With the above existential result, the pure-DP mechanism is straightforward. 

\begin{proofof}{\Cref{thm:Pure_DP_Upper_Bound}}
The proposed algorithm is running the exponential mechanism on (rescaled) histograms of size $k$ (where $k$ is as proposed in \Cref{prop:existence_approx_histogram}) for the score function
\[ s(\bh) = -\max_{f\in {\cal F}} \Big\{f(\bh) - (1+\zeta)f(\bh^{\ast}),(1-\zeta)f(\bh^{\ast})-f(\bh )\Big\}. \]
Since the sensitivity of this score function is bounded by $(1+\zeta)$, it suffices to set the temperature parameter for the exponential mechanism as $\frac{\eps}{2(1+\zeta)}$.

Notice that the space of such histograms ${\cal H}_k$ has cardinality bounded by $|{\cal X}|^{k}$. In particular, its logarithm is $\ln|{\cal H}_k|\leq \frac{8e}{\zeta^2}\frac{n}{\alpha'}\ln(4|{\cal F}|)\ln|{\cal X}|$. By the accuracy of the exponential mechanism, for $0<\beta<1$
\[ \mathbb{P}\Big[s(\hat\bh)-\max_{\bh}s(\bh) < -\frac{2(1+\zeta)[\log|{\cal H}_k|+\ln\frac{1}{\beta}]}{\varepsilon} \Big] \leq \beta.\]
Under the complement of the event above (which happens with probability $1-\beta$)
for all $f\in {\cal F}$ 
\[
\hspace{-0.3cm}f(\hbh)-(1+\zeta)f(\bh^{\ast}) \leq \min_{\bh}s(\bh) + \frac{2(1+\zeta)\log|{\cal H}_k|+\log\frac{1}{\beta}}{\varepsilon} \\
\leq \alpha'+ \frac{2(1+\zeta)\log(|{\cal H}_k|/\beta)}{\varepsilon}.
\]
And analogously,
\[
(1-\zeta)f(\bh^{\ast}) - f(\hbh)
\leq \min_{\bh}s(\bh)+ \frac{2(1+\zeta)\log(|{\cal H}_k|/\beta)}{\varepsilon} 
\leq \alpha'+ \frac{2(1+\zeta)\log(|{\cal H}_k|/\beta)}{\varepsilon}.
\]
Next, we have 
\[
\alpha'+ \frac{2(1+\zeta)[\log|{\cal H}_k|+\ln(1/\beta])}{\eps} 
=\alpha'+\frac{2(1+\zeta)}{\eps}\Big[ \frac{8e}{\zeta^2}\frac{n}{\alpha'}\ln(4|{\cal F}|)\ln|{\cal X}|+\ln\frac{1}{\beta}\Big].
\]
Hence, selecting $\alpha'=\sqrt{\frac{2(1+\zeta)8e n}{\eps \zeta^2}\ln(4|{\cal F}|)\ln|{\cal X}|}$ yields the result.
\end{proofof}
\section{\boldmath Analysis of $\PREM$ under Pure DP}
\label{sec:PREM_pure_DP}

\subsection{Find Margin Example with Pure DP}

We present and analyze the pure-DP version of \textsc{Find Margin Example} \Cref{alg:find-example-pure-DP}.

\begin{algorithm}[t]
    \caption{\textsc{FindMarginExample}$_{\cF,\eps,0,\beta,\zeta}$ (\textsc{Pure DP Version})}
    \label{alg:find-example-pure-DP}
    \begin{algorithmic}
        \item[\textbf{Parameters:}] 
        {\footnotesize $\bullet$ } privacy parameter $\eps > 0$; 
            \qquad\qquad\qquad\quad
            {\footnotesize $\bullet$ } confidence parameter $0 < \beta < 1$;\\
            \qquad\qquad\,\,\, {\footnotesize $\bullet$ } approximation factor $0 < \zeta < 1 / 2$;
            \quad \quad \quad
            {\footnotesize $\bullet$ } set of counting queries $\cF\subseteq \{0, 1\}^{\cX}$;\\
            \qquad\qquad\,\,\,
            {\footnotesize $\bullet$ } active set $\cY\subseteq \cX$; and
            \qquad \qquad\qquad\qquad
            {\footnotesize $\bullet$ }public estimate histogram $\hbh\in \R_{\geq 0}^{\cX}$
        \item[\textbf{Input:}] input histogram $\bh^{\ast} \in \N_{\geq 0}^{\cX}$ with $\|\bh^{\ast}\|_1=n$.\\
        $\cS^{\plus}, \cS^{\minus} \gets \emptyset$, $\cS_{\acti} \gets \cY$ \;\\
        $\alpha_0\gets \sqrt{\frac{48 n\log n}{\zeta^2\eps}\log|\cX|\log\big(\frac{|\cF|}{\beta}\big)}$\; \\
        $\Gamma\gets 8n/\alpha_0$\;\\
        $a\gets\eps/\Gamma$\; \\
        $\cF_{\acti}\gets \cF$\; \\ 
        $\OIM\gets\OutsideIntervalMonitor^{\mathrm{pure}}_{\bh^{\ast},a,\Gamma,\cY}$ \hfill \algcomment{Initialization}\\
        \Repeat{\textsc{Accurate}}{
            \textsc{Accurate} $\gets$ \textsc{True}
            
            \For{$f\in\cF_{\acti}$}{
                $\tau_u \gets \frac{1}{(1-\zeta)}\big[ \langle \hbh|_{\cS_{\acti}},f\rangle+\frac{\alpha_0}{2}\big]$ and $\tau_{\ell} \gets \frac{1}{(1+\zeta)}\big[\langle \hbh|_{\cS_{\acti}},f\rangle-\frac{\alpha_0}{2}\big]$

                $s \gets \OIM(f, \tau_{\ell}, \tau_u)$

                \If{$s \in \{\textsc{Above}, \textsc{Below}\}$}{
                    \If{$s = \textsc{Above}$}{
                        $\cS^{+}\gets\cS^{+}\cup(\cS_{\acti}\cap f^{-1}(1))$\;
                    }
                    \Else {
                        $\cS^{-}\gets\cS^{-}\cup(\cS_{\acti}\cap f^{-1}(1))$\;
                    }
                    
                    $\cS_{\acti}\gets \cS_{\acti} \smallsetminus f^{-1}(1)$\; \algcomment{Identical to $\cS_{\acti}$ maintained in state of $\OIM$.}\\
                    $\cF_{\acti}\gets \cF_{\acti}\smallsetminus \{f\}$\;\\
                    \textsc{Accurate} $\gets$  \textsc{False}\;
                }
            }
        }
        \Return{$(\theta,\cS)\gets \begin{cases}
            (\plus,\Splus) & \text{\em if } \ind_{\Splus}(\hbh) \ge \max\{ \ind_{\Sminus}(\hbh), \ind_{\Sactive}(\hbh) \} \\
            (\minus,\Sminus) & \text{\em if } \ind_{\Sminus}(\hbh \ge \max\{ \ind_{\Splus}(\hbh), \ind_{\Sactive}(\hbh) \}  \\
            (\equal, \Sactive) & \text{\em if } \ind_{\Sactive}(\hbh) \ge \max\{ \ind_{\Sminus}(\hbh), \ind_{\Splus}(\hbh) \} 
            \end{cases}$
        }
    \end{algorithmic}
\end{algorithm}

\begin{proofof}{\Cref{lem:privacy_accuracy_find_example_pure_DP}}
The privacy of the algorithm reduces to that of the $\OutsideIntervalMonitor$, as it is the only part of the algorithm that uses the private data. Hence, by \Cref{prop:oim-pure-dp}, \Cref{alg:find-example-pure-DP} satisfies $\eps$-DP.

For the accuracy, by \Cref{prop:oim-pure-dp}, we have that with probability $1-\beta$ all queries are accurate up to $C=\frac1a\log(\frac{|\cF|^2}{\beta})\big)$. In particular, under \textsc{Above} answers, we have that the $\ell_1$-norm of $\bh^{\ast}|_{\cS_{\acti}}$ decreases at least by
\[ f(\bh^{\ast}|_{\cS_{\acti}}) \geq \frac{1}{2(1-\zeta)}\alpha_0-C\geq \frac{\alpha_0}{4}.  \]
Similarly, under \textsc{Below} answers, we have that the $\ell_1$-norm of $\hbh|_{\cS_{\acti}}$ decreases at least by
\[ f(\hbh|_{\cS_{\acti}}) \geq \frac{\alpha_0}{2}-(1+\zeta)C\geq \frac{\alpha_0}{4}. \]
Therefore, under the event above, there cannot be more than $(4n/\alpha_0)$ \textsc{Above} or \textsc{Below} answers  before termination. Since $\Gamma=8n/\alpha_0$, the algorithm does not fail under this event.

Upon non-failure, this algorithm operates analogously to \Cref{alg:find-example}, hence by \Cref{lem:bad-margin} the conclusions follow.
\end{proofof}

\section{Additive Approximation Lower Bounds for Differentially Private Query Answering}

\label{app:LB_proofs}

We provide a lower bound for DP query answering under purely-additive approximation error \citep{Lyu:2024}. We only slightly modify the construction to fit within the (nonnegative) counting queries setting we consider in this work.

The lower bounds we use are for the statistical setting which we now introduce. 
\begin{definition}[In-Expectation Population Accuracy]
\label{def:expected_population_accuracy}
We say that a randomized algorithm ${\cal A}:{\cal X}^n\mapsto \mathbb{R}^{\cal F}$ is $(\alpha,\cF)$-population-accurate in expectation if  for any probability distribution $\mathbb{P}$ supported on ${\cal X}$,
\[ \mathbb{E}_{\cA}\mathbb{E}_{D\sim \mathbb{P}^n} \Big[\max_{f\in {\cal F}}\big|{\cal A}_f(D)-\mathbb{E}_{x\sim\mathbb{P}}[f(x)]\big|\Big] \leq \alpha. \]
\end{definition}

\begin{theorem}[\cite{Lyu:2024}]
\label{thm:LB_additive_query}
Let ${\cal X}$ be a finite set, and $k>0$ such that $k = \omega(\ln|{\cal X}|)$ and $|X|=\omega(\ln k)$. 
There exists ${\cal F}\subseteq\{0,1\}^{\cal X}$ with $|{\cal F}|=k$ such that, for every $0<\eps<1$, $2^{-\log^{1/9}|{\cal X}|}<\alpha<1$ and $\frac{1}{|{\cal X}|} < \delta<\frac{1}{\log|{\cal X}| \log k}$, any ${\cal A}:{\cal X}^n\mapsto \R^{\cal F}$ which is $(\alpha,\cF)$-population-accurate in expectation and $(\eps,\delta)$-DP algorithm requires
\[ n=\Omega\Big( \frac{\log|{\cal F}|\sqrt{\log|{\cal X}|\log(1/\delta)}}{\eps\alpha^2} \Big). \]
\end{theorem}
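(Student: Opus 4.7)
The plan is to adapt the fingerprinting-codes lower bound of Bun, Ullman, and Vadhan (as refined by Lyu 2024) to obtain the stated bound. The target can be factored as $\Omega(\log|\cF|/\alpha) \cdot \Omega(\sqrt{\log|\cX|\log(1/\delta)}/(\eps\alpha))$: the second factor is the per-bit DP cost of resisting a Tardos-style fingerprinting attack over $\Theta(\log|\cX|)$ Boolean attributes, and the first factor reflects how many independent such attacks can be packed into a query family of size $|\cF|$.

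Concretely, I would identify a subset of $\cX$ with $\{0,1\}^d$ for $d = \Theta(\log|\cX|)$ and take the hard distribution $\mathbb{P}$ to be uniform on this subset. The family $\cF$ is chosen as (zero-extensions to $\cX$ of) $k = |\cF|$ conjunction queries $f_S(x) = \prod_{i \in S} x_i$ with $|S| = \Theta(\log(1/\alpha))$, so that $\mathbb{E}_{x \sim \mathbb{P}}[f_S(x)] = \Theta(\alpha)$. The regime $\alpha > 2^{-\log^{1/9}|\cX|}$ ensures such sets $S$ are short enough that $\binom{d}{|S|} \gg k$ (so there are plenty of candidate queries) yet long enough that Tardos's traceability theorem yields a non-trivial attack; the ambient assumptions $k = \omega(\log|\cX|)$ and $|\cX| = \omega(\log k)$ guarantee the hard instance fits the parameter regime of the theorem.

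The core argument is the fingerprinting attack: given an $(\alpha,\cF)$-population-accurate $(\eps,\delta)$-DP algorithm $\cA$, one plants $n$ Tardos codewords along a randomly chosen subset of the queries in $\cF$ and feeds $\cA$'s outputs to the Tardos tracing procedure to identify an input row with probability $\gg \delta$; this contradicts $(\eps,\delta)$-DP whenever $n$ falls below the claimed bound. The main obstacle is the sharper $\log|\cF|$ factor: a single application of the vanilla fingerprinting attack yields only $n = \Omega(\sqrt{|\cF|\log(1/\delta)}/(\eps\alpha))$, whereas the statement demands $n = \Omega(\log|\cF| \sqrt{\log|\cX|\log(1/\delta)}/(\eps\alpha^2))$. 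Bridging this gap requires (a) partitioning $\cF$ into $\Theta(\log|\cF|/\alpha)$ nearly-disjoint sub-families, each a self-contained fingerprinting instance over $\Theta(\log|\cX|)$ attributes; (b) converting the in-expectation population-accuracy hypothesis of \Cref{def:expected_population_accuracy} into per-sub-family accuracy via a Chernoff/union-bound step that exploits the restriction $\delta < 1/(\log|\cX|\log k)$; and (c) amortizing the privacy loss so that each sub-attack contributes an independent $\Omega(1/\alpha)$-factor to $n$ rather than a $\Omega(1/\sqrt{|\cF|})$-factor. This last point is the technical heart of the Lyu 2024 argument and the step I would expect to be most delicate — in particular, verifying that the $\log|\cF|$ sub-attacks interact only through the shared $(\eps,\delta)$-DP budget and not through correlated noise in $\cA$'s responses.
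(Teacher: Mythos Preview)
The paper does not prove this theorem from scratch: it is stated as a direct consequence of Theorem~8 in \cite{Lyu:2024}, whose hard instance uses $\{-1,+1\}$-valued query matrices. The only additional work the paper supplies is the affine map $a \mapsto \tfrac12(1+a)$ to pass from $\{-1,+1\}^{\cX}$ to $\{0,1\}^{\cX}$, observing that $\alpha$-accuracy in the signed setting becomes $\alpha/2$-accuracy in the Boolean setting. That two-line reduction is the entire argument here.

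Your proposal instead attempts to reconstruct the underlying fingerprinting lower bound. This is a far more ambitious route, and the high-level picture you sketch (Tardos-style tracing over $\Theta(\log|\cX|)$ Boolean attributes, amplified across the $|\cF|$ queries) is in the right spirit for how such bounds are derived. Two caveats, however. First, your concrete instantiation via short conjunctions on $\{0,1\}^d$ with expectation $\Theta(\alpha)$ does not match the construction the paper invokes, which (per the paper's own description) uses $\{\pm 1\}$-valued query matrices rather than sparse conjunctions; so even if your argument went through, it would produce a different hard family $\cF$ than the one the statement asserts exists. Second, and more substantively, you yourself flag that step~(c) --- extracting the $\log|\cF|$ factor rather than the naive $\sqrt{|\cF|}$ dependence --- is the technical crux and is left as a plan rather than a proof; without that amortization argument actually carried out, the sketch does not yet establish the claimed bound. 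For the purposes of this paper, the black-box citation of \cite{Lyu:2024} together with the affine change of variables is all that is required.
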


This result follows from Theorem 8 in \cite{Lyu:2024}, where their construction uses query matrices $A\in \{-1,+1\}^{{\cal F}\times{\cal X}}$. It suffices to use the affine transformation
$A'=T(A)\in\{0,1\}^{{\cal F}\times{\cal X}}$, where $T$ applies coordinate-wise the operation $T(a_{ij})=\frac12(1+a_{ij})$, and  concluding by noting that $\alpha$-accuracy over the original dataset is equivalent to $\alpha/2$-accuracy over the image space.

We conclude by noting that in the context of query-answering, sample accuracy implies population accuracy. We formally introduce this notion of in-expectation accuracy for empirical counts. We introduce both its additive-only and relative counterparts, as they are both needed for our results.

\begin{definition}[In-Expectation Empirical Accuracy]
\label{def:expected_sample_accuracy}
Let $\cF\subseteq\{0,1\}^{\cX}$ and $\alpha\geq 0$.
We say that a randomized algorithm $\cA:\cX^n\mapsto\R^{\cF}$ is $(\alpha,\cF)$-sample-accurate in expectation if for all $D=(x_1,\ldots,x_n)\in \cX^n$,
\[ \mathbb{E}_{\cal A}\Big[ \sup_{f\in {\cal F}} \Big|\frac1n\sum_{i=1}^n f(x_i)-{\cal A}_f(D)\Big| \Big] \leq \alpha. \]
Let additionally $\zeta>0$. We say that $\cA$ is $(\zeta,\alpha,\cF)$-sample-accurate in expectation if for all $D=(x_1,\ldots,x_n)\in \cX^n$
\[ \mathbb{E}_{\cal A}\Big[ \sup_{f\in {\cal F}} \Big\{
{\cal A}_f(D)-(1+\zeta)\frac1n\sum_{i=1}^n f(x_i),
(1-\zeta)\frac1n\sum_{i=1}^n f(x_i)-{\cal A}_f(D)\Big\} \Big] \leq \alpha. \]
\end{definition}

\begin{proposition} \label{propos:generalization_synthetic_data}
Let ${\cal A}:{\cal X}^n\mapsto \R^{\cal F}$ be an algorithm  that is $(\alpha,\cF)$-sample-accurate in expectation.
Then ${\cal A}:{\cal X}^n\mapsto \R^{\cal F}$ is $(\alpha+2\mbox{Rad}(\cF),\cF)$-population-accurate in expectation, namely
\[ \sup_{\mathbb{P}} \mathbb{E}_{{\cal A},D\sim \mathbb{P}^n}\Big[ \sup_{f\in {\cal F}} \Big|\mathbb{E}_{x\sim \mathbb{P}}[f(x)]-{\cal A}_f(D)\Big| \Big] \leq 2\mbox{Rad}_n({\cal F})+\alpha,  \]
where $\mbox{Rad}_n({\cal F})\leq \sqrt{2\ln(|{\cal F}|)/n}$ is the expected Rademacher complexity of ${\cal F}.$
\end{proposition}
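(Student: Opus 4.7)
The plan is to decompose the population error into a generalization term plus the in-expectation sample error, and then bound the generalization term by the Rademacher complexity via the standard symmetrization argument. Concretely, fix any distribution $\mathbb{P}$ on $\cX$ and $D=(x_1,\ldots,x_n)\sim \mathbb{P}^n$. For each $f\in\cF$, apply the triangle inequality
\[ \bigl|\mathbb{E}_{x\sim\mathbb{P}}[f(x)] - \cA_f(D)\bigr| \;\leq\; \Bigl|\mathbb{E}_{x\sim\mathbb{P}}[f(x)] - \tfrac{1}{n}\sum_{i=1}^n f(x_i)\Bigr| \;+\; \Bigl|\tfrac{1}{n}\sum_{i=1}^n f(x_i) - \cA_f(D)\Bigr|. \]
Taking $\sup_{f\in\cF}$ on both sides and then expectation over $(\cA,D)$, the second term is bounded by $\alpha$ directly by the hypothesis that $\cA$ is $(\alpha,\cF)$-sample-accurate in expectation.

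Next, I would bound the first term by $2\,\mathrm{Rad}_n(\cF)$. The only nontrivial point here is that the usual symmetrization argument controls the one-sided supremum $\sup_{f\in\cF}\bigl(\mathbb{E}[f]-\tfrac1n\sum_i f(x_i)\bigr)$, whereas we need the absolute value. I would handle this by splitting $\cF$ into the symmetrized family $\cF\cup(-\cF)$, or equivalently by noting that
\[
\mathbb{E}_{D}\Bigl[\sup_{f\in\cF}\bigl|\mathbb{E}_{x\sim\mathbb{P}}[f(x)] - \tfrac1n\textstyle\sum_i f(x_i)\bigr|\Bigr] \;\leq\; \mathbb{E}_{D,D'}\Bigl[\sup_{f\in\cF}\bigl|\tfrac1n\textstyle\sum_i (f(x_i')-f(x_i))\bigr|\Bigr],
\]
using Jensen's inequality applied to the ghost sample $D'\sim \mathbb{P}^n$. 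Introducing i.i.d.\ Rademacher signs $\sigma_i\in\{\pm 1\}$ (exchangeable with the pairs $(x_i,x_i')$) and another triangle inequality yields the bound $2\,\mathrm{Rad}_n(\cF)$, where $\mathrm{Rad}_n(\cF):=\mathbb{E}_{D,\sigma}[\sup_{f\in\cF}\frac1n\sum_i\sigma_i f(x_i)]$.

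Finally, I would invoke Massart's finite-class lemma: since each $f\in\cF$ takes values in $\{0,1\}$, each vector $(f(x_1),\ldots,f(x_n))$ has Euclidean norm at most $\sqrt{n}$, and $|\cF|$ is finite, so $\mathrm{Rad}_n(\cF)\leq \sqrt{2\ln|\cF|/n}$. Combining the three displays and taking the supremum over $\mathbb{P}$ gives the claim. The main (and only) obstacle is bookkeeping the absolute values in the symmetrization step; everything else is textbook. Since this entire argument only uses the in-expectation sample accuracy of $\cA$ and classical uniform convergence for finite $\cF$, it immediately transfers sample accuracy to population accuracy with the advertised additive overhead $2\mathrm{Rad}_n(\cF)\leq 2\sqrt{2\ln|\cF|/n}$.
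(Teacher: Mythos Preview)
Your proposal is correct and follows essentially the same approach as the paper: apply the triangle inequality to split the population error into the uniform deviation $\sup_{f\in\cF}|\mathbb{E}_{\mathbb{P}}[f]-\tfrac1n\sum_i f(x_i)|$ plus the sample error, bound the first by $2\,\mathrm{Rad}_n(\cF)$ via symmetrization, and the second by $\alpha$ from the hypothesis. The paper's proof is terser (it simply cites a standard reference for the symmetrization step), but your more explicit handling of the absolute value and the appeal to Massart's lemma are exactly the details underlying that citation.
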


\begin{proof}
Let $\mathbb{P}$ be any distribution. By the triangle inequality,
\begin{align*}
&\mathbb{E}_{{\cal A},D\sim \mathbb{P}^n}\Big[ \sup_{f\in {\cal F}} \Big|\mathbb{E}_{x\sim \mathbb{P}}[f(x)]-{\cal A}_f(D)\Big| \Big]  \\
&\leq 
\mathbb{E}_{D\sim \mathbb{P}^n}\Big[ \sup_{f\in {\cal F}} \Big|\frac1n\sum_{i=1}^n f(x_i)-\mathbb{E}_{x\sim \mathbb{P}}[f(x)]\Big| \Big] 
+ \mathbb{E}_{{\cal A},D\sim \mathbb{P}^n}\Big[ \sup_{f\in {\cal F}} \Big|\frac1n\sum_{i=1}^n f(x_i)-{\cal A}_f(D)\Big| \Big]\\
&\leq 2\mbox{Rad}_n({\cal F})+\alpha,
\end{align*}
where the last step uses the sample accuracy of ${\cal A}$, and a classical symmetrization argument (e.g.~\cite{Mohri:2018}).
\end{proof}

\subsection{Proof of \Cref{thm:approx_DP_LB}.}

\label{sec:pf_approx_DP_LB}

We now use the previous results to prove the approximate DP lower bound.

\begin{proofof}{\Cref{thm:approx_DP_LB}}
First, we consider the case $\zeta=\bar\zeta:=\alpha/n$. Suppose that there exists a query-answering  $(\eps,\delta)$-DP algorithm ${\cal B}$ that is $(\bar\zeta,\alpha,\cF)$-sample-accurate in expectation for arbitrary sets $\cF$ of $k$ statistical queries. 
Let ${\cal B}$ be the algorithm that first runs ${\cal A}$, and then scales down the histogram counts by a factor $1/n$. Then algorithm ${\cal B}$ is $(\eps,\delta)$-DP (by postprocessing) and $(2\alpha/n,\cF)$-sample-accurate in expectation. Further, $2\alpha/n+2(1+\bar\zeta)\sqrt{2\ln(k)/n}\leq 10\alpha/n$, hence by \Cref{propos:generalization_synthetic_data}, ${\cal B}$ is also $(10\alpha/n,\cF)$-population-accurate in expectation.  
From \Cref{thm:LB_additive_query}\footnote{Note that this lower bound is applicable, as $2^{\ln^{-1/9}|{\cX}|}<10\alpha/n<1$ is implied by our assumption on $\alpha$.}, we conclude that 
\[ n=\Omega\Big(\frac{\ln|{\cF}|\sqrt{\ln|{\cX}| \ln(1/\delta)}}{\eps (10\alpha/n)^2} \Big).\]
This lower bound implies the two inequalities in \eqref{eqn:LB_approx_DP} with $\zeta=\bar\zeta$.

Next, consider the case $\zeta > \overline\zeta$.  Let $n'\leq n$ to be determined, and consider the histogram $\bh^{\ast}$ with $\|\bh^{\ast}\|_1=n$ comprised of $n'$ datapoints as in the lower bound construction from  
\cite{Lyu:2024} (call this histogram $\bh'$), padded with $n-n'$ dummy datapoints\footnote{More precisely, extend the sample space ${\cX}$ with an `empty' element $\dagger$, so that the dummy datapoints do not add counts to any queries.}. A $(\zeta,\alpha,\cF)$-sample-accuracy in expectation guarantee w.r.t.~$\bh^{\ast}$ implies $(\zeta,\alpha/n',\cF)$-sample-accuracy in expectation w.r.t.~$\bh'$. Letting $n'=\alpha/\zeta$, we reduce to the previous case, where we have concluded that
\begin{align*}
n \geq n'=\Omega\Big( \frac{\ln|{\cF}|\sqrt{\ln|{\cX}| \ln(1/\delta)}}{\eps \zeta^2}  \Big) \quad \mbox{and}\quad
\alpha = \Omega\Big( \frac{\ln|{\cF}|\sqrt{\ln|{\cX}| \ln(1/\delta)}}{\eps \zeta}  \Big). 
\end{align*}
Finally, when $\delta=0$ we can use the fact that $\eps$-DP algorithms are $(\eps,\delta)$-DP for any $\delta\geq 0$, and instantiating \Cref{eqn:LB_approx_DP} with $\delta=1/|\cX|$ yields the claimed bound.
\end{proofof}

\end{document}